  \providecommand\BibTeX{{%
    \normalfont B\kern-0.5em{\scshape i\kern-0.25em b}\kern-0.8em\TeX}}}
\renewcommand\footnotetextcopyrightpermission[1]{} 
\DeclareMathOperator*{\argmax}{arg\,max}
\newcommand{\nosection}[1]{\vspace{2pt}\noindent\textbf{#1.}}
\begin{document}

\title{Vertical Federated Linear Contextual Bandits}

\author{Zeyu Cao}
\authornote{Both authors contributed equally to this research.}
\affiliation{
 \institution{Tencent AI Lab}
 \city{Shenzhen}
 \country{China}}
\email{zc317@cam.ac.uk}

\author{Zhipeng Liang}
\authornotemark[1]
\authornote{Work done during internship at Tencent AI Lab.}
\affiliation{
 \institution{HKUST}
 \city{Hong Kong}
 \country{China}}
\email{zliangao@connect.ust.hk}
 
\author{Shu Zhang}
\affiliation{
 \institution{Tencent}
 \city{Shenzhen}
 \country{China}}
\email{bookzhang@tencent.com}
 
\author{Hangyu Li}
\affiliation{
 \institution{Tencent}
 \city{Shenzhen}
 \country{China}}
\email{masonhyli@tencent.com}
 
\author{Ouyang Wen}
\affiliation{
 \institution{Tencent}
 \city{Shenzhen}
 \country{China}}
\email{gdpouyang@tencent.com}
 
\author{Yu Rong}
\affiliation{
 \institution{Tencent AI Lab}
 \city{Shenzhen}
 \country{China}}
\email{royrong@tencent.com}

\author{Peilin Zhao}
\affiliation{
 \institution{Tencent AI Lab}
 \city{Shenzhen}
 \country{China}}
\email{masonzhao@tencent.com}
 
\author{Bingzhe Wu}
\authornote{Corresponding author.}
\affiliation{
 \institution{Tencent AI Lab}
 \city{Shenzhen}
 \country{China}}
\email{wubingzhe94@gmail.com}

\begin{abstract}

In this paper, we investigate a novel problem of building contextual bandits in the vertical federated setting, i.e., contextual information is vertically distributed over different departments. This problem remains largely unexplored in the research community. To this end, we carefully design a customized encryption scheme named orthogonal matrix-based mask mechanism(O3M) for encrypting local contextual information while avoiding expensive conventional cryptographic techniques. We further apply the mechanism to two commonly-used bandit algorithms, LinUCB and LinTS, and instantiate two practical protocols for online recommendation under the vertical federated setting. The proposed protocols can perfectly recover the service quality of centralized bandit algorithms while achieving a satisfactory runtime efficiency, which is theoretically proved and analyzed in this paper. By conducting extensive experiments on both synthetic and real-world datasets, we show the superiority of the proposed method in terms of privacy protection and recommendation performance.
\end{abstract}

\keywords{Vertical Federated Learning, Linear Contextual Bandits, Privacy-Preserving Protocols}

\begin{CCSXML}
<ccs2012>
   <concept>
       <concept_id>10002978.10002991.10002995</concept_id>
       <concept_desc>Security and privacy~Privacy-preserving protocols</concept_desc>
       <concept_significance>500</concept_significance>
       </concept>
   <concept>
       <concept_id>10010147.10010257.10010282.10010284</concept_id>
       <concept_desc>Computing methodologies~Online learning settings</concept_desc>
       <concept_significance>300</concept_significance>
       </concept>
 </ccs2012>
\end{CCSXML}

\ccsdesc[500]{Security and privacy~Privacy-preserving protocols}
\ccsdesc[300]{Computing methodologies~Online learning settings}
\maketitle
\pagestyle{plain}

\begin{figure}[t!]
    \includegraphics[width=0.8\linewidth]{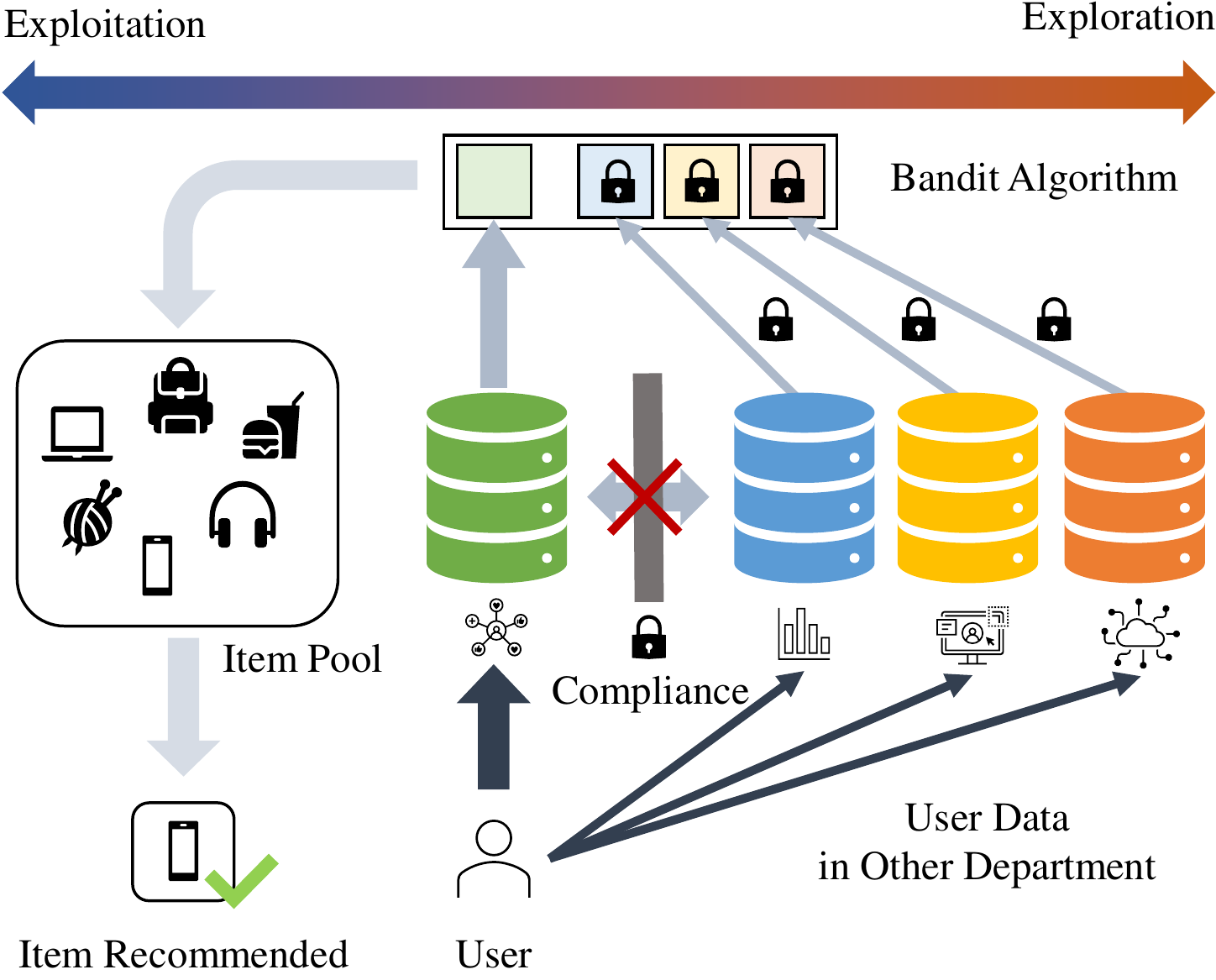}
    \centering
    \caption{Overview on using bandit algorithm for recommendation problem in vertical federated setting. The department A want to make the recommendation for the cold-start user $i$. Since the user is new to the system, instead of only exploiting user data in department A, one good strategy to improve the recommendation results is exploring the user data from other departments. These department's data are illustrated in different colors. However, as there are compliance requirement, the recommendation system cannot directly use the data, and must use privacy preserving techniques during recommendation. This is illustrated with the lock on the data when the contextual bandit is attempted to use the data to conduct recommendation.}
    \label{fig:bandit-description}
\end{figure}

\section{Introduction}
Personalized recommendation system is one of the fundamental building blocks of numerous web service platforms such as E-commerce and online advertising \cite{DBLP:journals/corr/abs-1708-05024}. 
From a technical perspective, most of these recommendation tasks can be modeled as an online sequential decision process wherein the system needs to recommend items to the current user based on sequential dynamics (i.e., historical user-item interactions and user/item contexts). 
In such an online process, the cold-start problem is ubiquitous since new users and items continuously join the online service and these new users typically come with incomplete profiles and they rarely interact with items \cite{DBLP:journals/eswa/LikaKH14}. 
Therefore, cold-start recommendation faces two critical challenges, i.e., lack of user profiles/contexts and lack of user-item interactions. 

The key task of cold-start problems is to efficiently collect user-item interactions when one or both of them are new. This task is challenging since there are two competing goals in such a collection process, i.e.,  exploration and exploitation.
To solve this problem, contextual bandit-based approach is seen as a principled way in which a learning
algorithm sequentially recommends an item to users based on contextual information of the user and item, while simultaneously
adapting its recommendation strategy based on historical user-click feedback to maximize total user clicks in the long run. 
Theoretically, seminal contextual bandit algorithms, including LinUCB \cite{DBLP:conf/nips/Abbasi-YadkoriPS11} and LinTS \cite{agrawal2013thompson} are proved to achieve the optimal balance between exploration and exploitation whereas empirically, these approaches have shown remarkable performance and have been deployed into various industrial scenarios for dealing with cold start recommendation \cite{lattimore2020bandit, slivkins2019introduction, DBLP:conf/www/LiCLS10}.

Despite their remarkable progress, such approaches cannot be directly employed in many real-world scenarios where the recommendation service provider only has some portions of the user's profiles while the other user features belong to different departments and cannot be directly shared with the recommendation provider due to privacy concerns or data regulations. 
For example, in the e-commerce recommendation setting, the service provider (e.g., Amazon and Taobao) can only hold shopping-related information while financial information such as deposits is held by other departments. 
This setting is documented as the vertical federated setting in the literature \cite{DBLP:conf/kdd/ChaiWZYC0022}. 
A question is naturally raised: \textit{How to design the contextual bandits in the vertical federated setting?}

To solve this problem, a natural idea is to apply modern privacy enhancing techniques such as secure multi-party computation (MPC) \cite{goldreich1998secure} and differential privacy (DP) \cite{dwork2008differential, dwork2014algorithmic} to current contextual bandit approaches. 
However, this manner typically leads to either high computation/communication costs or the degradation of recommendation performance: For example, in the multi-party computation, the Multiplication operator would lead to ~380 times slower and the comparison operator would even lead to ~34000 times slower \cite{DBLP:journals/corr/abs-1910-05299}. Thus directly applying these cryptographic techniques to the contextual bandit is intractable since there involve large amounts of multiplication/comparison operations. 
For differential privacy, injecting sufficient noise into the contextual information leads to a significant loss of the statistics utility and worse recommendation performance \cite{DBLP:conf/nips/0007C0L020, NEURIPS2021_df0e09d6}.

All these cryptographic techniques are originally designed for general-purpose computations. 
In this paper, we note that the core computation operators in the contextual bandits come with unique properties, thus providing the opportunity to design customized encryption schemes with better runtime efficiency and statistical utility. 
In general, the exploration mechanism is at the core of previous bandit algorithms, which typically use the empirical sample covariance matrix to construct the quantity needed by the exploration mechanism. 
Based on the property of the sample covariance matrix, we propose a technique named \textbf{o}rthogonal \textbf{m}atrix based \textbf{m}ask \textbf{m}echanism (O3M) for computing this quantity in a private and lossless manner, which is motivated by prior works on federated PCA~\cite{DBLP:conf/kdd/ChaiWZYC0022}. O3M is used by the data provider for masking the local contextual information (i.e., user features) before sending them to the service provider.
Specifically, O3M encodes the raw contextual information by multiplying with an orthogonal matrix (i.e., mask) then the data provider will send the masked data to the service provider. 
Since the service provider has no access to the mask, it cannot infer the original information even though it knows the masked data. 
However, due to the mathematical property of the orthogonal matrix, its effect can be removed when used in the construction of the key statistic in the bandit algorithms and thus we can obtain "lossless" performance in a privacy-preserving manner (see proof details in Section~\ref{sec:algorithm}). We further apply the O3M technique to two commonly-used bandit algorithms, LinUCB and LinTS,
and implement two practical protocols named VFUCB and VFTS for real-world recommendation tasks. For simplicity, in the following discussions, we refer these protocols as VFCB (Vertical Federated Contextual Bandit) without contextual ambiguity.

Besides, to help the reader have a  better understanding of our protocols, we conduct comprehensive complexity analysis and provide formal utility and security proofs in Section~\ref{sec:algorithm}.
By performing extensive empirical studies on both synthetic and real-world datasets, we show that the proposed protocols can achieve similar runtime efficiency and the same recommendation performance as the centralized bandit with a theoretical privacy-protection guarantee. We also point out that incorporating more user features indeed improves the performance of the contextual bandit in the vertical setting, which provides insight to improve the recommendation quality in an online and federated manner.

\section{Related Work}

\subsection{Federated Contextual Bandits}
Recently a few works have explored the concept of federated bandits \cite{shi2021federated, shi2021federated, agarwal2020federated, li2020federated, zhu2021federated}.
Among the federated bandits with linear reward structures, \citeauthor{DBLP:conf/nips/HuangWYS21} consider the case where individual clients face different K-armed stochastic bandits coupled through common
global parameters \cite{DBLP:conf/nips/HuangWYS21}.
They propose a collaborative algorithm to cope with the heterogeneity across clients without exchanging local feature vectors or raw data. 
\citeauthor{DBLP:conf/nips/DubeyP20} design differentially private mechanisms for tackling centralized and decentralized (peer-to-peer) federated learning \cite{DBLP:conf/nips/DubeyP20}.
\citeauthor{DBLP:books/sp/17/TewariM17} consider a fully-decentralized federated bandit learning setting with gossiping, where an individual agent only has access to biased rewards, and agents can only exchange their observed rewards with their neighbor agents \cite{DBLP:books/sp/17/TewariM17}. 
\citeauthor{DBLP:conf/aistats/LiW22} study the federated linear contextual bandits problem with heterogeneous clients respectively, i.e., the clients have various response times and even occasional unavailability in reality. They propose an asynchronous event-triggered communication framework to improve our method’s robustness against possible delays and the temporary unavailability of clients \cite{DBLP:conf/aistats/LiW22}.
\citeauthor{DBLP:conf/aistats/ShiSY21} consider a similar setting to
\citeauthor{DBLP:books/sp/17/TewariM17}'s setting but the final payoff is a mixture of the local and global model \cite{DBLP:conf/aistats/ShiSY21}. 
All the above-mentioned paper belong to the horizontal federated bandits, i.e., each client has heterogeneous users and all the clients jointly train a central bandit model without exchanging its own raw users' features. 
Since each user has her features all stored by one client, this horizontal federated setting has a sharp contrast with the VFL setting.
As far as we know, our paper is the first to consider the contextual bandit problem in the VFL setting.

\subsection{Vertical Federated Learning}
Vertical Federated Learning (VFL), aiming at promoting collaborations among non-competing organizations/entities with vertically partitioned data, has seen its huge potential in applications \cite{DBLP:journals/corr/abs-2202-04309} but still remains relatively less explored. 
Most of the existing work focuses on supervised learning and unsupervised learning settings.
For unsupervised learning problems,
\citeauthor{DBLP:conf/kdd/ChaiWZYC0022} propose the first masking-based VFL singular vector decomposition (SVD) method. 
Their method recovers to the standard SVD algorithm without sacrificing any computation time or communication overhead \cite{DBLP:conf/kdd/ChaiWZYC0022}.
\citeauthor{DBLP:journals/corr/abs-2203-01752} propose the federated principal component analysis and advanced kernel principal component analysis for vertically partitioned datasets \cite{DBLP:journals/corr/abs-2203-01752}.
For supervised learning problems,
\citeauthor{chen2020vafl} solves vertical FL in an asynchronous
fashion, and their solution allows each client to run stochastic gradient algorithms without coordination with
other clients. 
\citeauthor{hardy2017private} propose a VFL variant of logistic regression using homomorphic encryption \cite{hardy2017private}.
\citeauthor{wu2020privacy} propose a VF 
decision tree without dependence on any trusted third party \cite{wu2020privacy}.
\citeauthor{DBLP:conf/kdd/HuNYZ19} designed an asynchronous stochastic gradient descent algorithm to collaboratively train a central model in a VFL manner \cite{DBLP:conf/kdd/HuNYZ19}.
\citeauthor{DBLP:journals/corr/abs-2104-00489} introduce a framework to train neural network in the VFL setting using split neural networks \cite{DBLP:journals/corr/abs-2104-00489}.

\section{Preliminary}

\subsection{Problem Description}
We first introduce the definition of Linear Contextual Bandits.
\begin{definition}[Linear Contextual Bandits]
Consider a sequential decision process, where at each time step $t$, the environment would release a set of contexts $\{x_{t,a}\in \R^{d}\}_{a\in \mathcal{A}}$ for some fix $\mathcal{A}$ as the action set.
If context $x_{t,a_t}$ is been selected, then the environment would release a reward $r_t = x_{t,a_t}^{\top}\theta^{*}+\epsilon_t$ for some reward generating parameter $\theta^{*}\in \R^{d}$ and i.i.d. noise $\epsilon_t$.
A bandit algorithm $A$ is defined as a sequence of maps $\{\pi_{t}\}_{t=1}^T$.
Each map $\pi_{t}$ is from the historical information, $\{(x_{s,a_s}, r_s)\}_{s=1}^{t-1}$ and the current context $\{x_{t,a}\}_{a\in \mathcal{A}}$, to the distribution over the action set, $\Delta(\mathcal{A})$. 
The goal for the bandits algorithm is to minimize the long time regret, where regret is the gap between best possible achievable rewards and the rewards obtained by the algorithm. 
The regret is formally defined as:
\begin{align*}
    R(T) = \mathbb{E}[\sum_{t=1}^T r_{t,a_t^*}] - \mathbb{E}[\sum_{t=1}^T r_{t,a_t}],
\end{align*}
where $a_t^{*}=\arg\max_{a\in \mathcal{A}_t} x_{t,a_t}^{\top}\theta^{*}$ denotes the arm with the optimal reward.
\end{definition}

The task of personalized recommendation can be naturally modeled as a
contextual bandit problem. At a high level,
the contextual bandit aims to sequentially select arms (i.e., recommend items) to serve users when arrived in an online stream. The core of the bandit algorithm is an arm-selection strategy which is based on contextual information of the user (e.g., age and hobbies) and can be updated using  user-item historical feedback (click or not). The goal of the algorithm is to maximize the total reward (i.e., total user clicks) in the long run.  
In this paper, we consider the most widely studied linear contextual bandits, which is formally defined as follow.

\begin{figure}[!htbp]
    \includegraphics[width=0.5\textwidth]{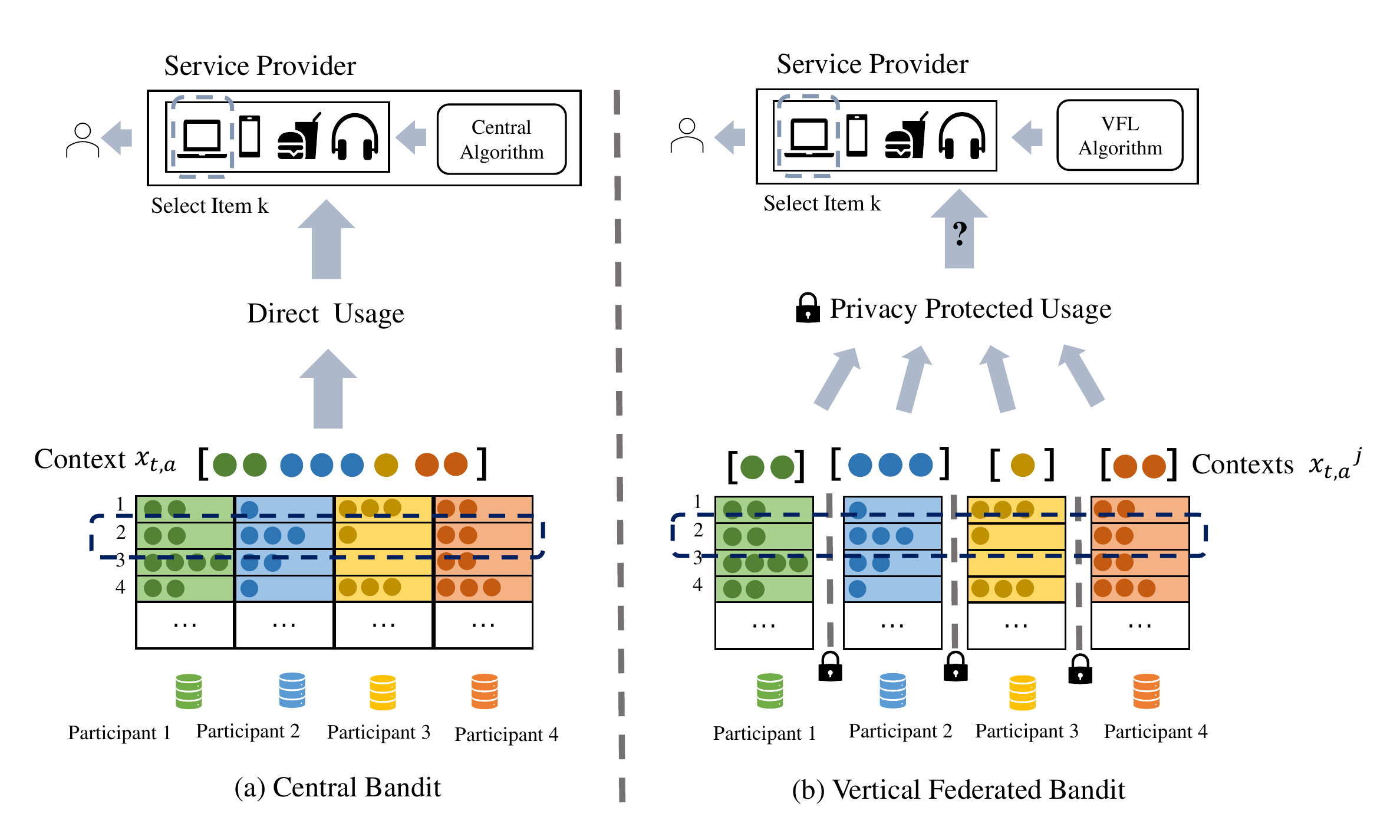}
    \centering
    \caption{Comparison between centralized(a) and vertical federated(b) setting.} %
    \label{fig:bandits-overview}
\end{figure}

Figure~\ref{fig:bandits-overview} shows a high-level comparison between centralized and vertical federated setting. 
In summary, user-item contextual information are collected by different participants and the figure uses different colors to distinguish the source of the information (e.g., information marked with green color are from participant 1). 
The number of circles denotes the specific feature value. 
Each row of the table contains the contextual information of one given user (denoted as $x_{t,a}$ ). 
The top box denotes the service provider, which is responsible for building contextual bandits for recommending items to users \footnote{In Figure~\ref{fig:bandits-overview}, the service provider is distinct from the data providers for simplicity. In practice, the service provider can also provide partial contextual information.}.
For centralized setting (Figure~\ref{fig:bandits-overview}a), service provider has access to all contextual information even though they belong to different participants. In contrast, in the vertical federated setting (Figure~\ref{fig:bandits-overview}b), the service provider cannot access the information directly from other participants due to privacy issues (there are locks between different columns in Figure~\ref{fig:bandits-overview}b). Therefore, these information need to be used in a privacy-preserving way which is the main goal of this paper. Specifically, we present an effective yet lossless way to use these information while protecting the users' privacy.  We will present our solution in Section~\ref{sec:algorithm}.

\subsection{Contextual Bandit in the Centralized Setting}

\label{sec:centralized-UCB}

We first introduce the most commonly-used contextual bandit algorithms, LinUCB and LinTS in the centralized setting, i.e., all user's contextual information can be fully accessed by the service provider.

In this setting, the recommendation service provider runs the linear contextual algorithm $A$ which proceeds in discrete trials indexed by time $t=1,2,3,\cdots, T$. 
In trial $t$, the learning procedure can be formulated as the following steps:
\begin{enumerate}[leftmargin=*]
    \item A observes an user $u_t$ and current arm set $\mathcal{A}_t=[K]$ consisting of $K$ different arms $a$ (e.g., items for recommendation). The \emph{context} $x_{t,a}$ can be seen as a feature vector describing the contextual information of both user $u_t$ and the given arm $a$.
    \item $A$ chooses arm based on the estimated value $v_{t,a}$ for each each arm 
    \begin{align}
        \label{eq:UCB}
        a_t = \argmax_{a\in \mathcal{A}_t} v_{t,a}.
    \end{align}
    For LinUCB, detailed in the left part of Figure~\ref{fig:bandits-overview}, $v_{t,a}$ is called UCB value with definition $v_{t,a} = \hat{r}_{t,a} + \hat{B}_{t,a}$, where $\hat{r}_{t,a} \coloneqq x_{t,a}^{\top}\hat{\theta}_t $ is the estimated mean for item $a$ and \\ $\hat{B}_{t,a} \coloneqq \beta_t \sqrt{x_{t,a}^{\top} \Lambda^{-1}_t x_{t,a}}$ is the optimistic term to encourage exploring new items. In particular, \citeauthor{DBLP:conf/nips/Abbasi-YadkoriPS11} prove that the true reward of arm $a$ is less than the $\hat{r}_{t,a} + \hat{B}_{t,a}$ with high probability, thus $\hat{r}_{t,a} + \hat{B}_{t,a}$ is an optimistic estimation for the true reward of arm $a$ for some appropriate $\beta_t$ \cite{DBLP:conf/nips/Abbasi-YadkoriPS11}.
    Here $\Lambda_t \coloneqq \sum_{s=1}^{t-1} x_{s,a_s}x_{s,a_s}^{\top}+\lambda \cdot I$ for some $\lambda>0$. 
    As for LinTS, $v_{t,a} = x_{t,a}^{\top}\hat{\mu}_{t}$ where $\hat{\mu}_{t}\sim \mathcal{N}(\hat{\theta}_t, v^2 \Lambda_t^{-1})$ with $v^2$ is the variance of the reward \cite{agrawal2013thompson}.
    The sampling randomness in the $\tilde{\Lambda}_{t}$ encourages the algorithm to explore different arms.
    \item Once $A$ recommends $a_t$ for user $u_t$, then it receives a random reward $r_{t} = x_{t,a_t}^{\top}\theta^{*}+\epsilon_t$ where $\{\epsilon_t\}_{t=1}^{T}$ are i.i.d. sub-Gaussian mean zero noise \cite{wainwright2019high}. This is demonstrated in the right part of Fig.~\ref{fig:bandits-overview}.
    Here, $\theta^{*}\in \R^d$ is an unknown parameter used for generating rewards. 
    $\hat{\theta}_t$ is a known parameter for predicting reward and the goal of $A$ is to approximate $\theta^*$ with $\hat{\theta}_t$.
    \item Then A uses the newly observed reward $r_t$ and the new sample covariance matrix $\Lambda_{t+1}$ to update the estimator $\hat{\theta}_t$ via the Ordinary Least Square (OLS) algorithm, i.e., $\hat{\theta}_{t+1} = \Lambda_{t+1}^{-1} u_{t+1}$ where $\Lambda_{t+1} = \Lambda_{t} + x_{t,a_t}x_{t,a_t}^{\top}$ and $u_{t+1} = u_{t} +x_{t,a_t}r_t$.
\end{enumerate}

\subsection{Vertical Federated Contextual Bandits}
\label{sec:VFL-settings}
In this subsection, we formally present the VFCB problem, upon which we will design our privacy-protected algorithms.

\nosection{Participant Role}
In our VFCB setting, there is a single \emph{active} participant (AP) and multiple \emph{passive} participants (PP).
Specifically, active participant  directly interacts with users and holds historic user feedback (e.g., click an ad or not in the recommendation system). 
This participant is always the recommendation service provider.  
In contrast, passive participants can be seen as user feature providers who hold complementary
contexts with respective user. 
They do not directly interact with user.
Without loss of generality, we assume the total number of participants are $M$, and we index the participants as $1,2,\cdots, M$.
Here the participant $1$ is the AP and the remaining ones are passive. 
Additionally, a privacy mask generator(PMG) is required to perform our O3M protocols. 
This can either be a secure third party or been randomly selected from PP with some mechanism agreed upon.

\nosection{Vertical Federated Setting}
In contrast to the centralized setting, the vertical federated setting refers to that the user contexts ($x_{t,a}, a\in \mathcal{A}$)  are distributively stored in different departments and cannot be shared since privacy concern or data regulations. 
Formally speaking, a specific context $x_{t,a}$ is divided into different parts $x_{t,a}=[x_{t,a}^{1},x_{t,a}^{2},\cdots,x_{t,a}^{M}]$ and the $j$-th local context $x_{t,a}^{j}\in \R^{d\times d_j}$ can only be accessed by the $j$-th participant. 
In this setting, the core task is how to jointly leverage these local contextual information to approximate or even recover the reward mean and the upper-confidence bound in the centralized setting, i.e., presented in Eq~\ref{eq:UCB}. 
As introduced in Section \ref{subsec:masked}, this paper presents a simple yet effective way to achieve this task, which is illustrated in Figure~\ref{fig:masking-process}.

\nosection{Threat Models}
We consider all parties involved in VFCB semi-honest in our paper. This is inline with previous work on vertical federated learning \cite{DBLP:journals/tist/RenYC22}.The semi-honest model is not as strong as other threat model but nonetheless it is still powerful for our inter-departmental vertical federated learning setting. In semi-honest setting, each participant adheres to the designed protocol but it may attempts to infer information about other participant's input (i.e., local contextual information). Additionally, our threat model assumes that our PMG cannot collude with AP under any circumstances. This ensures the privacy constraint can be achieved via following privacy analysis.
\section{Algorithm}
\label{sec:algorithm}
In this section, we present our O3M privacy technique and then demonstrate how to apply it in the linear contextual bandits model to encrypt the user's information.

\subsection{Challenges}
As shown by the previous literature, LinUCB and LinTS algorithm can both achieve rate-optimal regret for the linear contextual bandits problem. 
In this section, we design a privacy-protect variant of them to adopt to the vertical federated setting while perfectly recovering to the centralized one. 
In fact, developing vertical federated variants for them are nontrivial.
As illustrated in Figure~\ref{fig:bandits-overview}, due to the vertical federated setting, each participant only have access to the private version of the contexts from other participants. 
Typically each participant can apply the general widely-used privacy-awareness technique, e.g., differential privacy (DP) or multi-party computation (MPC), to protect their context before sending out to others.
However, DP has been documented to significantly degrade the performance of the contextual bandits \cite{DBLP:conf/nips/ShariffS18} while MPC would add large amount of extra computational cost.

To design a VFCB framework without the above drawbacks, we have to step into the key quantities of the LinUCB/LinTS algorithm.
For LinUCB, firstly, both the estimated reward and the optimistic terms $\hat{r}_{t,a}$ and $\hat{B}_{t,a}$ require the $\Lambda_t$ in their computation.
Using private contexts $\tilde{x}_{t,a_t}$ to construct the private version of $\Lambda_t$, denoted as $\tilde{\Lambda}_t$, might introduce extra bias.
For $\hat{r}_{t,a}$, $\tilde{\Lambda}_{t}^{-1}$ then would be multiplied with the $\sum_{s=1}^t \tilde{x}_{s,a_{s}}\tilde{r}_s$, where $\tilde{x}_{s,a_{s}}$ and $\tilde{r}_s$ are both (possibly) private and different from the non-private one.
Once the $\tilde{\theta}_t$ is obtained via the private $\tilde{\Lambda}_t^{-1}$ and private historical rewards, it then be utilized to multiply with the private context $\tilde{x}_{t,a}$ to construct the estimated reward for arm $a$ at time $t$.
For $\hat{B}_{t,a}$, $\tilde{\Lambda}_h^{-1}$ is utilized to multiply with the private context to derive the optimistic term $\tilde{x}_{t,a}^{\top}\tilde{\Lambda}_h^{-1}\tilde{x}_{t,a}$. 
For LinTS, it shares the most of the computation steps as LinUCB, especially the sample covariance matrix $\tilde{\Lambda}_t$.

In short, to maintain a small gap between the vertical federated bandits and the centralized ones, we need to design a sound privacy-protection mechanism for the contexts as well as the rewards by exploiting the computation details described above.
The target of the privacy-protection mechanism is to keep the estimator $\tilde{\theta}_t$ and $\tilde{\Lambda}_t$ derived from the private contextual information close to the non-private one without too much extra computational cost.

\subsection{O3M}
\label{subsec:masked}
Motivated by prior study \cite{DBLP:conf/kdd/ChaiWZYC0022} on masking data for vertical federated single value decomposition, we propose O3M on contextual bandits.

From a high level perspective, O3M allocates each participant a part of a pre-defined orthogonal matrix to mask their contexts and send the masked contexts to the AP. 
Then AP sums the masked local contexts to derive the masked contexts and directly run the LinUCB/LinTS upon them.We illustrate the O3M step by step with Figure~\ref{fig:masking-process}. 
To help the presentation, we use $d_j$ to denote the dimension for the local contextual information $x_{t,a}^{j}$ stored in participant $j$. 
The total dimension for the global contextual information $x_{t,a}$ is $d=\sum_{j=1}^M d_j$. 
\footnote{ $Q \in R^{d \times d}$ should be a square as $\sum d_j = d$. However, mask $Q$ and subsequent $Q^j$s are drawn not to scale in Figure~\ref{fig:masking-process} for illustration purpose. } 

\begin{figure}[t]
    \includegraphics[width=0.5\textwidth]{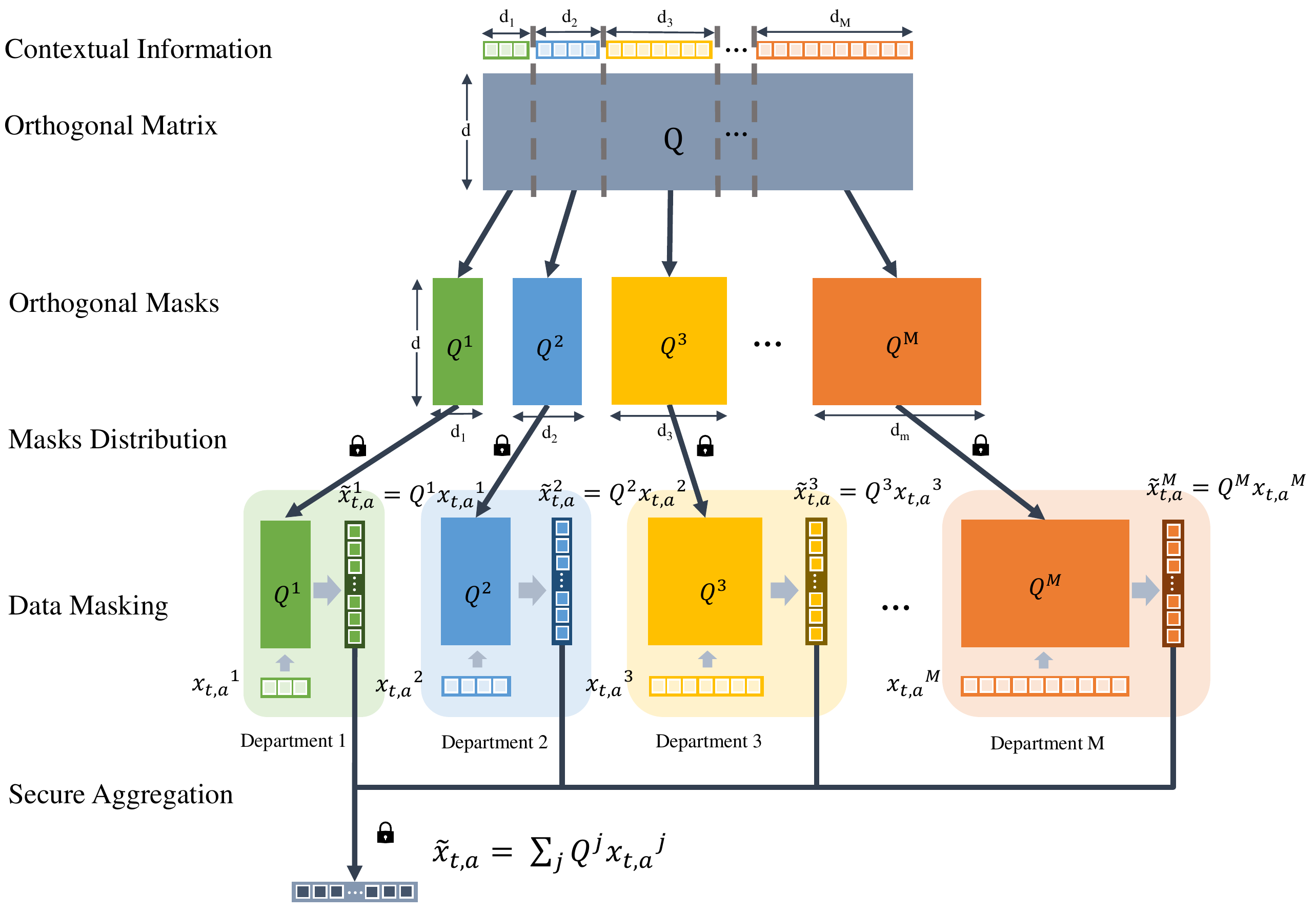}
    \centering
    \caption{Overview for O3M process in VFCB. In PMG, the orthogonal mask $Q \in R^{d \times d}$ is partitioned with context columns $d_j$ into $Q^j \in R^{d \times d_j}$.  The mask $Q_j$ is then distributed to each participant $j$ securely. After that, each participant conducts masking process, producing $\tilde{x}_{t,a}^j = Q_j x_{t,j}$. Finally, the data is securely aggregate as $\tilde{x}_{t,a} = \sum_j Q_j x_{t,j}$. }
    \label{fig:masking-process}
\end{figure}

At the beginning, either a third party or a PP randomly selected by some mechanism is agreed by all the participants as the Private Mask Generator (PMG). 
Then the PMG will generate an orthogonal matrix $Q\in \R^{d\times d}$. 
The orthogonal mask $Q$ is partitioned by columns into $M$ parts to match the dimension $d_j$ of each participant's context $x_{t,a}^{j}$, i.e., $Q=[Q^1,Q^2, \cdots, Q^M]$ such that $Q^j\in \R^{d\times d_j}$.
Each participant's mask is distinguished by colored solid squares (e.g. solid green mask $Q^1$ is dedicated for participant 1).
The mask $Q^j$ is then securely sent to participant $j$ (the lock on each arrow indicates the secure peer to peer transfer process).

After receiving the masks, each participant would utilize it to encrypt her local contexts.
The details of encryption can be seen at the lower section of Figure~\ref{fig:masking-process} and 
we use different colors to distinguish contexts for different participants (following the same color scheme with the partitioned mask $Q^j$).
The data provider $j$ obtains the masked context $\tilde{x}_{t,a}^{j}$ by multiplying its mask $Q^j$ (solid square) with its local context $x_{t,a}^{j}$ (horizontal hollow bars) $\tilde{x}_{t,a}^{j}$ (vertical hollow bars).
Note that after the masking, the masked context $\tilde{x}_{t,a}^{j}$ shares the same shape as the global context $\tilde{x}_{t,a}$.
The data provider then sends her masked contexts to the AP and the AP obtains the masked context $\tilde{x}_{t,a}$ by directly adding the masked local contexts together, following  $\tilde{x}_{t,a} = \sum_{j=1}^M \tilde{x}_{t,a}^{j}$.
Then the AP runs the LinUCB/LinTS protocol on the private contexts $\tilde{x}_{t,a}$ for $a\in \mathcal{A}_t$. 
We summarize the above procedure into the VFUCB in algorithm~\ref{alg:vfUCB} and the corresponding VFTS is deferred in Appendix~\ref{alg:VFTS}

\begin{algorithm}[t]
\caption{VFUCB}
\label{alg:vfUCB}
\begin{algorithmic}[1]
  \STATE \textbf{Init:} $\tilde{\Lambda}_1 = \mathbf{I}_d$, $\tilde{u}_1 = \mathbf{0}$
  \STATE Any PP or a secure third party randomly generates a orthogonal $Q$ and conduct a column partition $Q=[Q^1,Q^2, \cdots, Q^M]$ where $Q^i\in \R^{d\times d_i}$ and send the i-th submatrix to the client i
    
  \FOR {$t=1$ {\bfseries to} $T$}
  
  \STATE An action set $X_t=\{x_{t,i}\}_{i=1}^K$ arrives to the system while the client $j$ can only observe $x_{t,i}^{j}\in \R^{d_j\times 1}$ and $x_{t,i} = [x_{t,i}^{1}; x_{t,i}^{2}; \cdots; x_{t,i}^{M}]$

  \STATE For each $j\in[2,\cdots,M]$, Client j send $\{Q^j x_{t, i}^{j}\}_{i\in[K]}$ to the AP to derive $\tilde{x}_{t,i}=\sum_{j=1}^M Q^j x_{t,i}^{j}$
  
  \STATE AP recommend the $a_t\in[K]$ item via 
  \begin{align*}
  \vspace{-1ex}
      a_t = \arg \max_{a\in \mathcal{A}} \tilde{r}_{t,a} + \tilde{B}_{t,a} 
  \end{align*}
  where 
  \begin{align}
  \vspace{-1ex}
      \tilde{r}_{t,a} \coloneqq \tilde{x}_{t,i}^{\top}\tilde{\theta}_t, \quad \tilde{B}_{t,a} \coloneqq \beta_t \sqrt{\tilde{x}_{t,i}^{\top} \tilde{\Lambda}_{t}^{-1} \tilde{x}_{t,i}}.
  \end{align}

  \STATE AP receives $r_t$ from the user

  \STATE AP update $\Lambda_{t+1}$ via 
      $\tilde{\Lambda}_{t+1} = \tilde{\Lambda}_{t} + \tilde{x}_{t,i}\tilde{x}_{t,i}^{\top}$
  \STATE Update $\tilde{u}_{t+1}$ via 
      $\tilde{u}_{t+1} = \tilde{u}_{t} + r_t \tilde{x}_{t,i}$

  \STATE AP update $\tilde{\theta}$ via 
      $\tilde{\theta}_{t+1} = \tilde{\Lambda}_{t+1}^{-1}  \tilde{u}_{t+1}.$
  \ENDFOR 
\end{algorithmic}
\end{algorithm}

\subsection{Lossless Recommendation}
\label{subsec:lossless}
In this subsection, we will prove that our VFUCB/VFTS algorithm running on the private data can behave perfectly the same as the centralized ones using on the non-private data.
\footnote{For demonstration purpose, we focus on VFUCB for lossless recommendation analysis. The analysis for VFTS can be find in Appendix~\ref{appendix:VFTS}.}

We provide the theoretical justification to show that, although the masked contexts are biased from the original one, all the bias will be canceled out in the estimators constructed from the masked contexts and the estimators can perfectly recover to the non-private ones.
The key observations of our analysis are two-folds:
The first is that for any matrix $Q\in \R^{d\times d}$, any vector $x\in \R^{d\times 1}$, with any column partition $Q=[Q^1, Q^2, \cdots, Q^M]$ where $Q^j\in \R^{d\times d_j}$ and $x=[x^{1}, x^{2}, \cdots, x^{M}]$ where $x^{j}\in \R^{d\times d_j}$, we have $Q x = \sum_{j=1}^M Q^j x^{j}$.
Thus, the summation of all the masked contexts $\sum_{j=1}^{M}Q^j x_{t,a}^{j}$, is exactly the transformed context with the orthogonal matrix $Q$ as the transformer, i.e., $Qx_{t,a}$.

Thus, the AP is in fact running a centralized linear contextual bandits algorithms on the private contexts $\tilde{x}_{t,a} = Qx_{t,a}$. 
We will prove the high privacy protection level using the O3M, i.e., the masked contexts can hardly reveal the information about the original contexts.
While the gap between masked contexts and original ones are inevitable in safeguarding the privacy,
we need to ensure the statistics utility is not severely damaged so that our VFCB/VFTS can still achieve the rate-optimal regret bound.
Here comes to our second observation: by choosing the mask matrix $Q$ as any orthogonal matrix in our O3M, our VFUCB/VFTS can lossless recover to the non-private LinUCB/LinTS algorithms.

To be more concrete, taking LinUCB as an example, the decision is made by choosing the item with the largest UCB values, summation of privated predicted reward $\tilde{r}_{t,a}$ and the optimistic term $\tilde{r}_{t,a}$.
Thus we will prove that $\tilde{r}_{t,a}$ and $\tilde{B}_{t,a}$ are exactly the same as the estimated rewards $\hat{r}_{t,a}$ and optimistic terms $\tilde{B}_{t,a}$ constructed in the centralized LinUCB bandits.
We formalized our observations in the following theorem.

\begin{theorem}
Given the fixed sequence $\{x_{t,a}\}_{t\in[T], a\in A}$ and corresponding return sequence $\{r_{t,a}\}_{t\in[T], a\in A}$, for any time $t\in [T]$, we have:
\vspace{-1ex}
\begin{itemize}[leftmargin=*]
    \item The estimated value obtained by LinUCB (\ref{eq:UCB}) $\hat{r}_{t,a}$, is the same as the one obtained by the VFUCB (Algorithm~\ref{alg:vfUCB}) $\tilde{r}_{t,a}$.
    \item The optimistic value obtained by LinUCB (\ref{eq:UCB}) $\hat{B}_{t,a}$, is the same as the one obtained by the VFUCB (Algorithm~\ref{alg:vfUCB}) $\tilde{B}_{t,a}$.
\end{itemize}
\end{theorem}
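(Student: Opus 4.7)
The plan is to reduce the claim to a one-line algebraic observation: VFUCB is nothing but centralized LinUCB applied to the orthogonally rotated contexts $Qx_{t,a}$, and both the reward estimate and the confidence bonus are invariant under such a rotation. First I would verify the rotation identity itself. By conformably decomposing $Q=[Q^1,\ldots,Q^M]$ and $x_{t,a}=[x_{t,a}^{1};\ldots;x_{t,a}^{M}]$ according to the dimensions $(d_1,\ldots,d_M)$, block multiplication gives $\tilde x_{t,a} = \sum_{j=1}^M Q^j x_{t,a}^{j} = Qx_{t,a}$, which is exactly what the AP aggregates in Algorithm~\ref{alg:vfUCB}. Because the eventual scores will be shown equal, the two algorithms select identical arms at every step and therefore share the same realised history $\{x_{t,a_t}, r_t\}$, so only the algebra of the internal estimators has to be compared.

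Next I would track how each quantity maintained by VFUCB transforms relative to its centralized counterpart, using only $QQ^{\top}=Q^{\top}Q=I$ and the initialisation $\tilde\Lambda_1 = I_d = QI_dQ^{\top}$. Substituting $\tilde x_{s,a_s}=Qx_{s,a_s}$ into the update $\tilde\Lambda_{t+1} = \tilde\Lambda_t + \tilde x_{t,a_t}\tilde x_{t,a_t}^{\top}$ and unrolling yields
\begin{align*}
\tilde\Lambda_t \;=\; I_d + \sum_{s=1}^{t-1} Qx_{s,a_s}x_{s,a_s}^{\top}Q^{\top} \;=\; Q\Bigl(I_d + \sum_{s=1}^{t-1} x_{s,a_s}x_{s,a_s}^{\top}\Bigr)Q^{\top} \;=\; Q\Lambda_t Q^{\top},
\end{align*}
so $\tilde\Lambda_t^{-1} = Q\Lambda_t^{-1}Q^{\top}$. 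Analogously, $\tilde u_t = \sum_{s=1}^{t-1} r_s\tilde x_{s,a_s} = Qu_t$, and hence $\tilde\theta_t = \tilde\Lambda_t^{-1}\tilde u_t = Q\Lambda_t^{-1}Q^{\top}Qu_t = Q\hat\theta_t$.

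With these three transformation rules in hand, the two conclusions fall out by collapsing $Q^{\top}Q=I$ inside the bilinear expressions that define the UCB score:
\begin{align*}
\tilde r_{t,a} &= (Qx_{t,a})^{\top}(Q\hat\theta_t) = x_{t,a}^{\top}\hat\theta_t = \hat r_{t,a}, \\
\tilde B_{t,a} &= \beta_t\sqrt{(Qx_{t,a})^{\top} Q\Lambda_t^{-1}Q^{\top}(Qx_{t,a})} = \beta_t\sqrt{x_{t,a}^{\top}\Lambda_t^{-1}x_{t,a}} = \hat B_{t,a}.
\end{align*}

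There is no conceptual obstacle here: the whole argument is bookkeeping exploiting that outer products and quadratic forms transform covariantly under $x\mapsto Qx$. The one subtlety I would flag is matching the regularisation at the base case: the identity $\tilde\Lambda_1 = Q\Lambda_1 Q^{\top}$ requires $\Lambda_1$ to be a scalar multiple of the identity (which holds here with $\lambda=1$, consistent with the initialisation in Algorithm~\ref{alg:vfUCB}); once the base case is in place, the identities propagate automatically through the recursive updates, and no inductive hypothesis beyond bookkeeping is needed.
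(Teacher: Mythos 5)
Your proposal is correct and takes essentially the same route as the paper's proof: the block identity $\tilde{x}_{t,a}=\sum_j Q^j x_{t,a}^{j}=Qx_{t,a}$, the transformation rules $\tilde{\Lambda}_t=Q\Lambda_t Q^{\top}$, $\tilde{u}_t=Qu_t$, $\tilde{\theta}_t=Q\hat{\theta}_t$, and the cancellation $Q^{\top}Q=I$ inside the score and bonus. Your ``unrolling'' of the recursions combined with the same-arms/same-history remark is precisely the paper's induction over $t$ (equal scores at all steps $s<t$ imply identical selected actions and hence identical histories, which then yields equal scores at step $t$), and your flag that the base case needs $\Lambda_0$ to be a scalar multiple of the identity matches the paper's $\lambda I$ initialisation.
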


\begin{proof}
\vspace{-1ex}
Our proof is completed using the mathematical induction:
\begin{enumerate}[leftmargin=*]
    \item For $t=0$, we have $\hat{\theta}_0 = \tilde{\theta}_0 = \bm{0}$ and $\hat{\Lambda}_0 = \tilde{\Lambda}_0 = \lambda I$ following the LinUCB and VFUCB. 
    For any $a\in \mathcal{A}_t$,
    \begin{align*}
        \tilde{B}_{0,a}&= \beta_0\sqrt{\tilde{x}_{t,a}^{\top}\tilde{\Lambda}_0^{-1}\tilde{x}_{t,a}} = \beta_0\sqrt{\lambda^{-1} x_{t,a}^{\top}Q^{\top}QQ^{\top}Qx_{t,a}}\\
        &= \beta_0\sqrt{\lambda^{-1} x_{t,a}^{\top}x_{t,a}} = \beta_0 \sqrt{x_{t,a}^{\top}\Lambda_0^{-1}x_{t,a}} = \hat{B}_{0,a}.
    \end{align*}
    \item Assume $\hat{\theta}_s = \tilde{\theta}_s$ and $\hat{B}_{s,a} = \tilde{B}_{s,a}$ for $s\in[t-1]$ and all $a\in \mathcal{A}_s$. 
    Since the decision at any time $s$ is determined by the estimated values and optimistic terms for each arm, the historical selected actions by the LinUCB are the same as those of the VFUCB.
    Now we consider the case $t$.

    We first prove the relationship between $\tilde{\Lambda}_t$ and $\Lambda_t$.
    \begin{align*}
            \tilde{\Lambda}_t^{-1} &= \left(\lambda I + \sum_{s=1}^{t-1} \left (\sum_{j\in[M]}Q^jx_{t, a_t}^{j}\right) \left(\sum_{j\in[M]}Q^jx_{t, a_t}^{j}\right)^{\top}\right)^{-1}\\
            &= \left (\lambda I + \sum_{s=1}^{t-1} Qx_{t,a_t}(Qx_{t,a_t})^{\top}\right)^{-1}\\
            &= \left (\lambda I + \sum_{s=1}^{t-1} Qx_{t,a_t}x_{t,a_t}^{\top}Q^{\top}\right)^{-1} = \left (Q\Lambda_{t}Q^{\top}\right)^{-1} = Q\Lambda_t^{-1}Q^{\top}.\\
    \end{align*}
    Thus for any $t\in [T]$ and $a\in \mathcal{A}_t$, we first prove that $\hat{B}_{t,a} = \tilde{B}_{t,a},\hat{B}_{t,a} = x^{\top}_{t,a} \Lambda_t^{-1} x_{t,a} = x^{\top}_{t,a} Q^{\top} Q \Lambda_t^{-1} Q^{\top} Q x_{t,a} = \tilde{x}^{\top}_{t,a} \tilde{\Lambda}_t^{-1} \tilde{x}_{t,a} = \tilde{B}_{t,a}.$ 
    Next we verify that $\hat{r}_{t,a} = \tilde{r}_{t,a}$, we decompose into the following three steps to finish.
    As we know, $\hat{r}_{t,a} = \hat{\theta}_{t}^{\top}x_{t,a}$ and thus we first investigate the relationship between $\hat{\theta}_{t}$ and $\tilde{\theta}_t$.
    Then,\\$
        \tilde{\theta}_t = \tilde{\Lambda}_{t}^{-1}  \tilde{u}_{t} = Q\Lambda_t^{-1}Q^{\top} Q (\sum_{s=1}^t r_s x_{s,a_s}) = Q\Lambda_t^{-1} (\sum_{s=1}^t r_s x_{s,a_s}) = Q\hat{\theta}_t.$
    Finally, $
        \tilde{r}_{t,a} = \sum_{j=1}^M (Q^{j}x_{t,a}^{j})^{\top}\tilde{\theta}_t = (Qx_{t,i})^{\top}\tilde{\theta}_t= x_{t,i}^{\top}Q^{\top}Q\hat{\theta}_t = x_{t,i}^{\top}\hat{\theta}_t = \hat{r}_{t,a}.$
    Now we prove that $\tilde{r}_{t,a} = \hat{r}_{t,a}$ and $\tilde{B}_{t,a} = \hat{B}_{t,a}$ for all $a\in \mathcal{A}_t$ and by the mathematical induction, this conclusions holds for all $t\in[T]$. 
\end{enumerate}
\end{proof}

\subsection{Complexity Analysis}
We present the complexity analysis for VFCB in terms of computation and communication cost in Table \ref{tab:complexity-analysis}. We follow the notation introduced in previous section. The detail derivations are presented in Appendix~\ref{appendix:complexity}. 
\begin{table}[ht]
\caption{Computational/Communication Complexity Analysis for VFCB}
\label{tab:complexity-analysis}
\begin{tabular}{@{}lll@{}}
\toprule
Model  & computation cost                   & communication cost                \\ \midrule
VFUCB  & $O(T \times (K\times M \times d + K \times d^2 + d^3))$ & $O(T \times K \times M \times d)$ \\
LinUCB & $O(T \times (K \times d^2 + d^3))$                                   & \multicolumn{1}{c}{N/A}           \\ \midrule
VFTS   & $O(T \times (K\times M \times d + K \times d^2 + d^3))$                                   & $O(T \times K \times M \times d)$ \\
LinTS  & $O(T \times (K \times d + d^3))$ & \multicolumn{1}{c}{N/A}           \\ \bottomrule
\end{tabular}
\vspace{-2ex}
\end{table}

As shown in the table, O3M introduces a common cost of $O(K\times M\times d + K \times d^2)$ for computing the mask. This is the major change in computational cost in our VFUCB protocol. On the other hand, for VFTS, as sampling only takes once per step $t$, the lower cost for calculating $a_t$ results in a complexity of only $(K\times d)$. Hence the O3M process contributes more and results in $O(K\times d^2)$ in VFTS. However, as we will discuss in Section~\ref{sec:experiment-answer}, in practice, the complexity of VFTS will be manageable under realistic settings of variables. As of communication cost, our communication complexity is linear towards the respective variables $T,K,d,M$.

\subsection{Privacy Analysis}
To show that O3M is indeed secure in the threat model, we need to prove that the masked data $\tilde{x_{t,a}^j}$ cannot be recovered without knowing the mask $Q^j$. This is achieved via the following theorem\footnote{The proof of the theorem can be find in Appendix~\ref{appendix:privacy}.}:

\begin{theorem}
\label{theorem:masked-secure}
Given a masked data $D = Q_{1}X_{1}$, there are infinite number of raw data $X_{2}$ that can be masked into $D$, i.e. $Q_{2}X_{2} = Q_{1}X_{1} = D$
\end{theorem}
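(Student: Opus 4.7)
The plan is to exhibit an explicit continuous family of preimages of the observed masked data. Given the adversary's view $D = Q_1 X_1$, I would parameterize candidate raw inputs by the orthogonal group: for every orthogonal $Q_2 \in \R^{d\times d}$, define $X_2 \coloneqq Q_2^{\top} D$. Then the identity $Q_2 X_2 = Q_2 Q_2^{\top} D = D$ follows immediately from orthogonality, so $(Q_2, X_2)$ reproduces $D$ exactly as well as $(Q_1, X_1)$ does. This single construction already gives one valid $X_2$ per element of $O(d)$.

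The second step is to argue that this yields infinitely many \emph{distinct} $X_2$. Because $O(d)$ is a Lie group of dimension $d(d-1)/2$ whenever $d \geq 2$, it is uncountable. I would upgrade ``uncountably many $Q_2$'' into ``uncountably many $X_2$'' by showing that $Q_2^{\top} D = Q_2'^{\top} D$ forces $(Q_2 - Q_2')^{\top} D = 0$, i.e.\ every column of $D$ must lie in the kernel of $(Q_2 - Q_2')^{\top}$. As long as $D$ has at least one nonzero column, a one-parameter subgroup of rotations acting nontrivially on the linear span of that column produces a continuum of distinct $X_2$, completing the claim.

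The last item to handle is the degenerate case $D = 0$. Since $Q_1$ is orthogonal and hence invertible, $D = 0$ forces $X_1 = 0$, so the adversary actually sees nothing, and the theorem's intended content (non-recoverability of $X_1$) is vacuous. I would simply flag this case and note that the privacy guarantee is trivially preserved.

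The argument rests on the identity $Q_2 Q_2^{\top} = I$ and is therefore short; the only real obstacle is verifying that distinct orthogonal masks generically produce distinct raw data, which is why I would spend the bulk of the write-up on the kernel argument in the second step rather than on the construction itself.
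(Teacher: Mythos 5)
Your construction is essentially the paper's: the paper picks an arbitrary rotation $R$ and sets $Q_2 = Q_1 R$, $X_2 = R^{-1}X_1$, which coincides with your parameterization $X_2 = Q_2^{\top}D$, since every orthogonal $Q_2$ has the form $Q_1 R$ with $R = Q_1^{\top}Q_2$ and then $Q_2^{\top}D = R^{\top}Q_1^{\top}Q_1X_1 = R^{-1}X_1$. Your two additions --- the kernel argument showing that distinct masks actually yield distinct raw data $X_2$, and flagging the degenerate case $D = 0$ where all candidates collapse to zero --- supply rigor that the paper's one-line proof silently omits (it never verifies the infinitely many pairs $(Q_2, X_2)$ give infinitely many distinct $X_2$), so your write-up strengthens rather than diverges from its argument.
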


This means that even AP knows all the masked data, given that it does not know any mask (other than the one associate with itself), since there are infinite number of mask and data combinations, it's not possible to directly infer PP's original user context data. Hence the user context data privacy is preserved within each PP. 
\section{Experiment}

In this section, we design experiments based on both simulation and real-world datasets to answer the following questions:

\begin{enumerate}[topsep=1ex,leftmargin=*]
    \item What is the performance gap between VFCB with privacy protection and the centralized bandits without privacy protection?
    \item What is the extra computation and communication cost for the VFCB compared with their centralized  variant?
    \item Whether more data introduced from other participants improves the performance of VFCB?
\end{enumerate}

In summary, Q1 and Q2 address the cost of safeguarding privacy in vertical federated settings while Q3 depicts the improvements from joint training using different data sources. In the following subsections, we first describe the settings for both synthetic and real-world datasets. Then, we use the experiment results to answer the three question proposed. For consistency, we adopt the same notation from previous sections. Details on experiments environment can be find at Appendix~\ref{subsec:experiment-env}.

\subsection{Experiment Settings}
\nosection{Synthetic Experiment}
In our synthetic dataset, $\forall t \in[T]$ and $\forall a \in[K]$, the context $x_{t,a}$ is generated from a multivariate normal distribution $x_{t,i} \sim \mathcal{N}(0,\sigma^2 I)$ with $\sigma^2 = 0.05$. It is then normalized with  l2 norm. The reward generating parameter $\theta$ is generated from a multivariate normal distribution $\theta \sim \mathcal{N}(0,\sigma^2 I)$ with $\sigma^2 = 0.05$. It is also normalized with l2 norm. The reward $r_t$ is then generated as $r_t = x^T_{t,a} \theta + \epsilon$ where $\epsilon \sim \mathcal{N}(0,0.05)$. The regret is then measured with $R(T) = \sum^{T}_{t=1} x^T_{t,a^{*}} \theta -x^T_{t, a_t} \theta$. The details for the experiment runs can be find at Appendix~\ref{subsec:synthetic-setting}.

\nosection{Criteo dataset}
Criteo dataset \cite{criteo_ai_labs_2014} is an ad stream log dataset provided by Criteo AI Lab on Kaggle Ad display challenge. Since the dataset is anonymized and the label meanings are unknown, we follow previous approach and assumptions from \cite{DBLP:conf/mlsys/MalekzadehAHL20} to construct the experiment dataset. The details for the experiment runs can be find at Appendix~\ref{subsec:criteo-setting}.

\subsection{Experiment Discussion}
\label{sec:experiment-answer}
We demonstrate our experimental results on Figure~\ref{fig:Synthetic-Analysis}, Figure~\ref{fig:Complexity-Comparison} and Figure~\ref{fig:Partial-Comparison}. We then use these results to answer in details the 3 questions we asked previously.
\begin{figure*}[t]
    \centering
    \subfloat[\label{regret-comparison-vfucb}]{\includegraphics[width=0.25\textwidth]{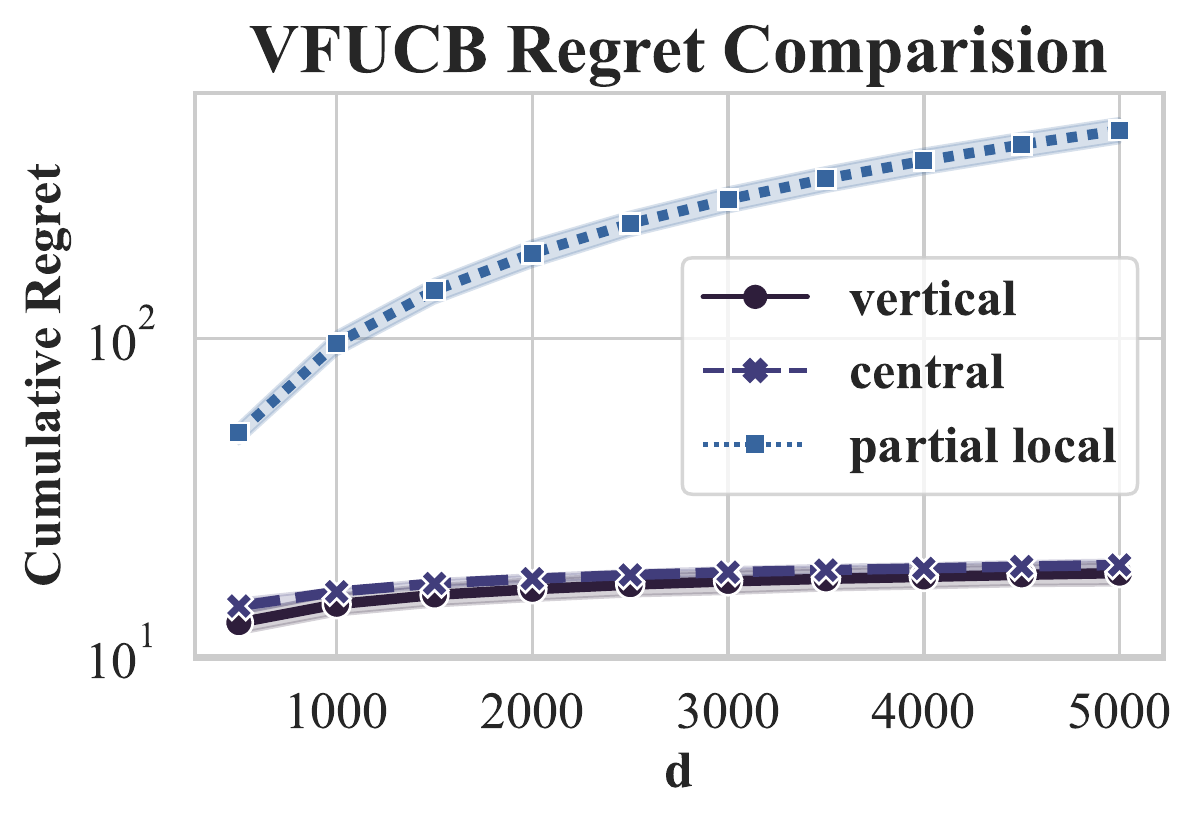}}
    \subfloat[\label{reconstruction-error-vfucb}]{\includegraphics[width=0.25\textwidth]{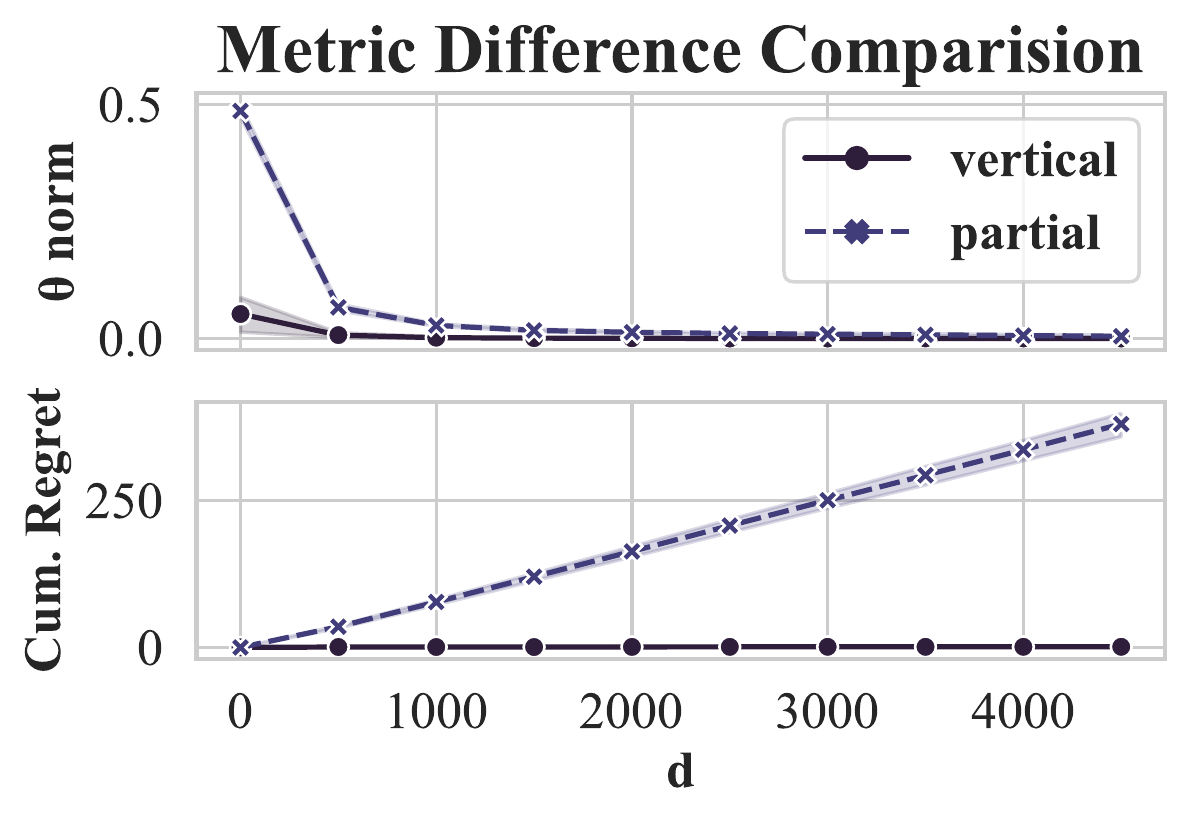}}
    \subfloat[\label{regret-comparison-vfts}]{\includegraphics[width=0.25\textwidth]{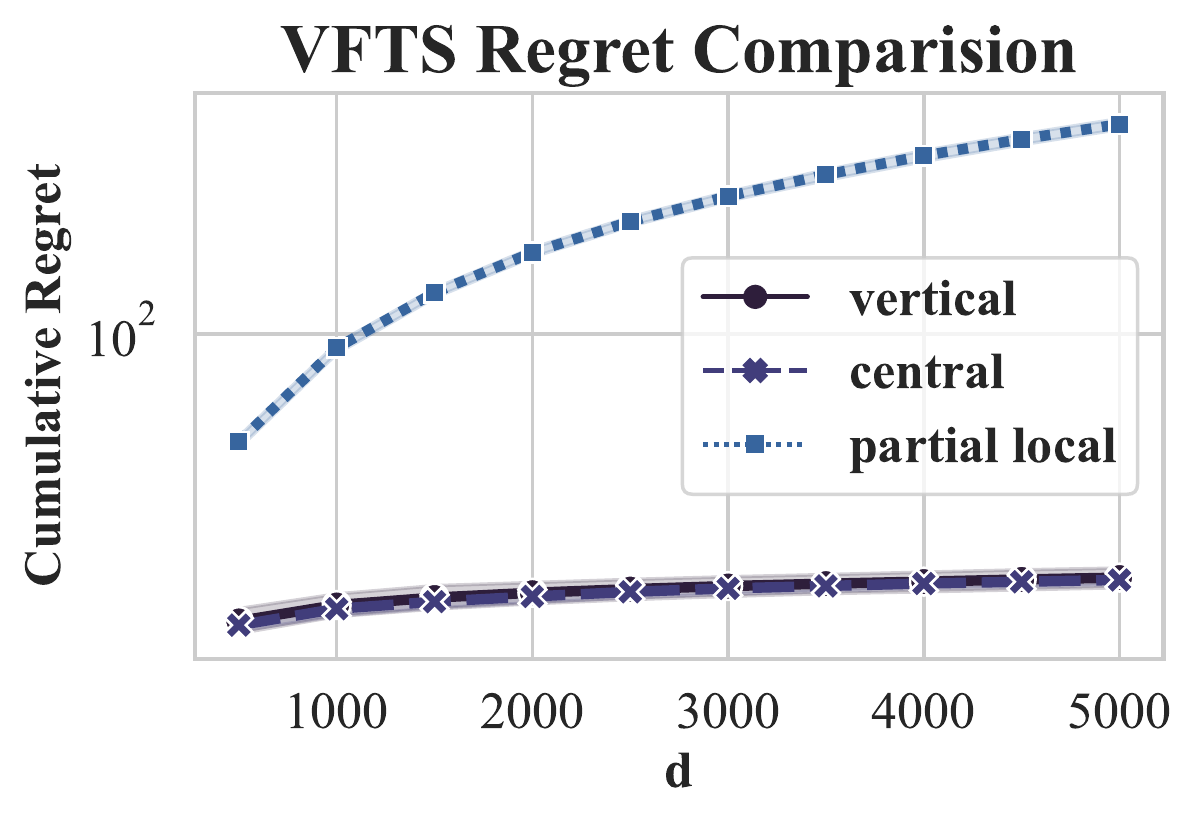}}
    \subfloat[\label{reconstruction-error-vfts}]{\includegraphics[width=0.25\textwidth]{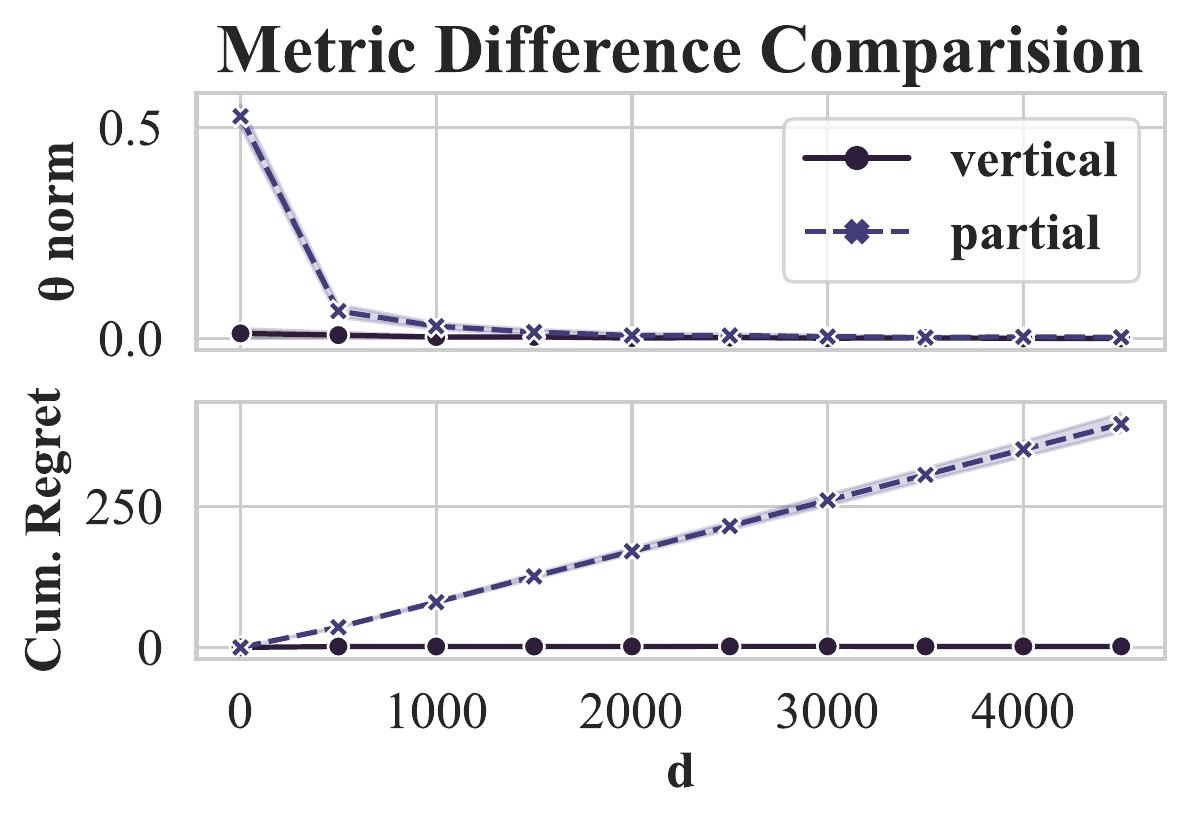}}
    \vspace{-2ex}
    \caption{Experimental results for (a) Comparison between cumulative regret of central LinUCB, partial-local LinUCB and VFUCB on synthetic dataset. (b) Comparison in metrics difference for VFUCB and partial-local LinUCB with Central LinUCB. (c) Comparison between cumulative regret of central LinTS, partial-local LinTS and VFTS on synthetic dataset. (d) Comparison in metrics difference for VFTS and partial-local LinTS with Central LinTS. For ease of comparison, the y scale for (a) and (c) is changed to log scale to differentiate between VFCB and partial bandits.}
    \label{fig:Synthetic-Analysis}
\end{figure*}

\nosection{The performance gap between VFCB with privacy protection and that of without privacy protection}
\label{nosec:answer-1}
Although we already establish the theoretical guarantee that our VFCB is lossless in Section~\ref{subsec:lossless}, we want to use experimental results to demonstrate it. Furthermore, we want to show that VFCB protocols can make the same decision as their centralized variant on experimental data. The result of our synthetic analysis is demonstrated on Figure~\ref{fig:Synthetic-Analysis}.

As shown in Figure~(\ref{regret-comparison-vfucb}), our VFUCB protocol completely achieves the same regret of central LinUCB. However, partial LinUCB cannot perform as well as VFCB. The regret of it grows significantly compared with VFUCB and full LinUCB, resulting in more than 10 times larger cumulative regret.  Similarly, for Figure~(\ref{regret-comparison-vfts}), our VFTS protocol achieves similar cumulative regret as central LinTS while the cumulative regret for partial LinTS grows to more than 10 times larger.

We further investigate the reconstruction error of the VFCB and particial bandits in Figure~(\ref{reconstruction-error-vfucb}) and Figure~(\ref{reconstruction-error-vfts}). In particular, we plot  $\lVert\widehat{\theta}_t^{\operatorname{VFL}}\rVert_2 - \lVert\widehat{\theta}_t^{\operatorname{Central}}\rVert_2$ and $\lVert\widehat{\theta}_t^{\operatorname{Partial}}\rVert_2 - \lVert\widehat{\theta}_t^{\operatorname{Central}}\rVert_2$ to check the reconstruction error in the estimator for each bandit model. Our VFCB metrics quickly recover to the centralized bandits at almost 0 difference, while the partial recovers much slower. Additionally, the cumulative regret difference between partial bandits and full bandits grows continuously to more than 250 while our VFCB protocols difference remains at almost 0. 
Hence, we can show that the performance gap between VFCB and their central variants are small as our VFCB protocols can indeed achieve same cumulative regret target as LinUCB and LinTS.

\vspace{1ex}
\nosection{Extra computation and communication cost for the VFCB}
\begin{figure}[t!]
    \centering
    \subfloat[\label{comp-cost-comparison-vfucb}]{\includegraphics[width=0.4\textwidth]{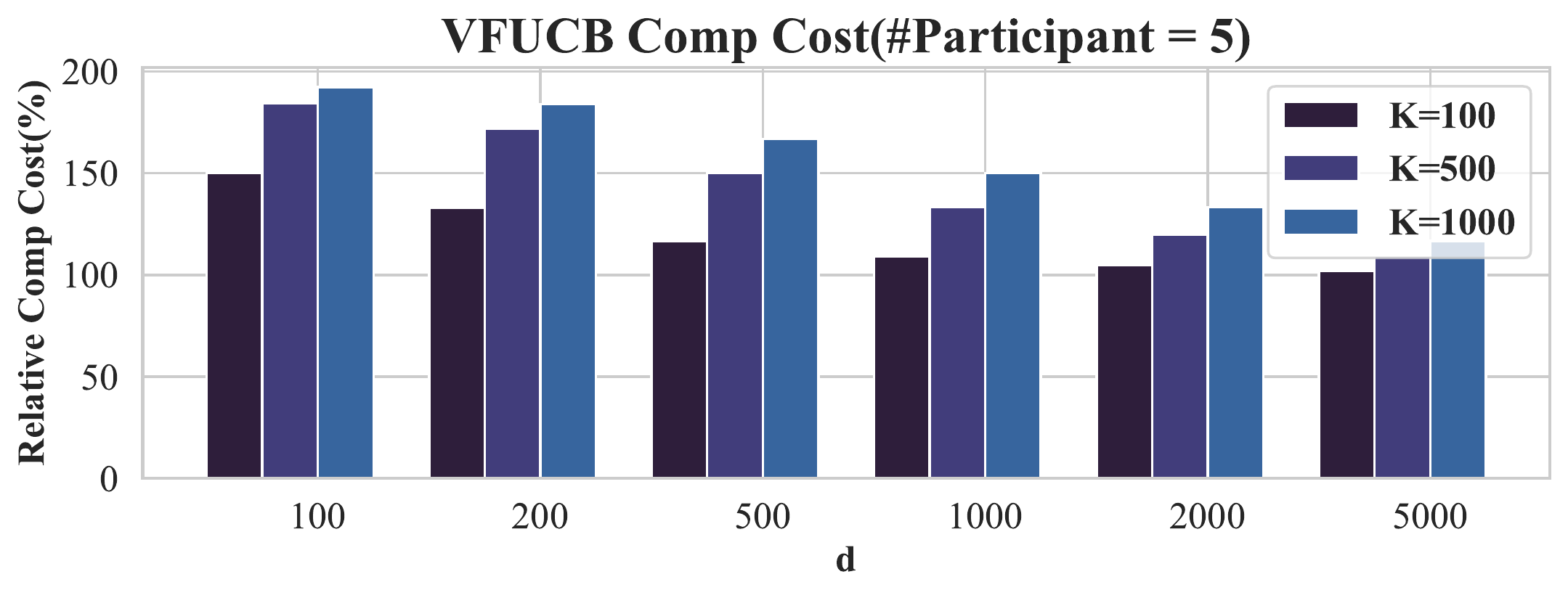}}
    \newline
    \subfloat[\label{comp-cost-comparison-vfts}]{\includegraphics[width=0.4\textwidth]{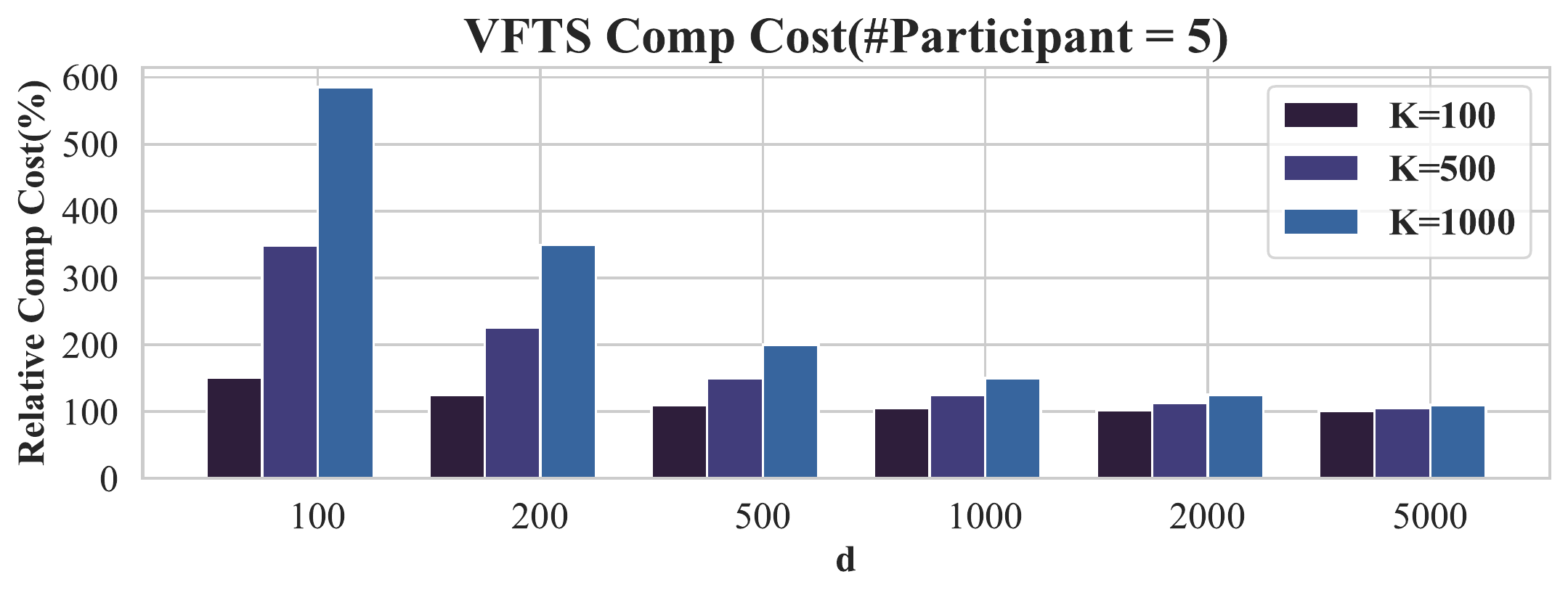}}
    \newline
    \subfloat[\label{comm-cost-comparison-vfl}]{\includegraphics[width=0.4\textwidth]{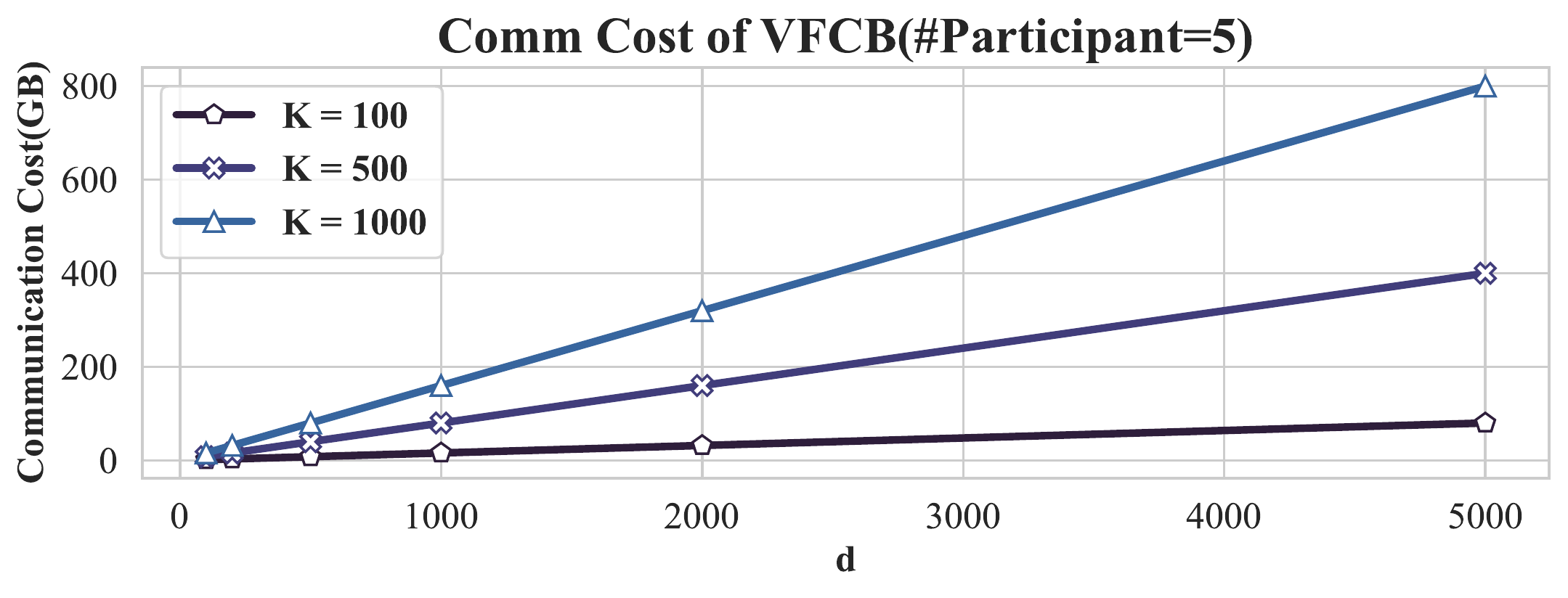}}
    \newline
    \vspace{-2ex}
    \caption{Complexity costs for (a) Relative Computational Cost for VFUCB. (b) Relative Computational Cost for VFTS. (c) Communication Cost in GB for VFCB active participant.}
    \label{fig:Complexity-Comparison}
\end{figure}

\label{nosec:answer-2}
We first perform a synthetic calculation for VFCB protocols in terms of computational cost. For computational cost,we assume that a single arithmetic operation, comparison  for one
element of the matrix has 1 operation cost. For other known cost outlined in prior analysis, we follow them directly in the computation. \footnote{The details for computational complexity analysis and assumptions can be find at Appendix~\ref{appendix:complexity}.} We present the computational cost difference in Figure~(\ref{comp-cost-comparison-vfucb}) and Figure~(\ref{comp-cost-comparison-vfts}) under realistic settings: Steps $T=5000$, item size $K=100,500,1000$ and number of participant $M=5$. For ease of comparison, We use the relative computational cost\footnote{The relative cost can be calculate via $cost_{relative} = \cfrac{cost_{VFCB}}{cost_{central}}$.} in our figure. As shown in Figure~(\ref{comp-cost-comparison-vfucb}), VFUCB only increases computational cost by a maximum of 2 fold on a typical participant setting. Further, this increase is marginal after either item size $K$, or contextual dimension $d$ grows significant. This shows that compared with LinUCB, VFUCB does not introduced a significant computational complexity burden. As of VFTS, due to the O3M complexity introduced as $O(K\times d^2)$, the relative cost is initially higher at above 5x, but decrease to almost identical with LinTS as context dimension $d$ grows larger, resulting higher $O(d^3)$ term. Additionally, due to the expensive cost of sampling a multivariate normal distribution, the effect will be marginal practically. After all, the overall computational cost for our VFCB is still much smaller compared with the high complexity introduced from homomorphic encryption. 

We then perform communication cost analysis in Figure~(\ref{comm-cost-comparison-vfl}). We assume that each array element is a double floating point number of 8 byte and each array is transferred without any compression. This means that our communication cost calculated here should be an upper bound. Since the communication process is contributed by O3M only, both VFCB protocols should have the same communication cost. We present the cost with $T=5000$ under unit in GB at item size $k=100,500$ and $1000$. The result shows that VFCB's communication cost grows linearly in terms of item size $K$ and context dimension $d$. Nontheless, even under extreme setting of $d=5000, K=1000$ the data transfer is still relative small, at 6.25GB per step. Realistic settings would have much smaller contextual size, resulting in smaller communication per step. For example $d=1000,K=1000$ will only have 0.04GB per step. Additionally, our prior analysis shows that the communication complexity also grows linearly with number of participants $M$. However, since the number of participant in a vertical federated setting is much smaller compared with horizontal federated setting, the cost increase is much smaller compared with other two variables. As this is an upper bound estimated without compression, by incorporating other data compression techniques, this cost is comparable with the high communication cost introduced in MPC.

%In total, we show that, the extra computational and communication cost introduced by O3M to our VFCB protocols is managable under realistic settings.

\begin{figure}[t]
    \centering
    \subfloat[\label{partial-data-comparison-synthetic-ucb}]{\includegraphics[width=0.24\textwidth]{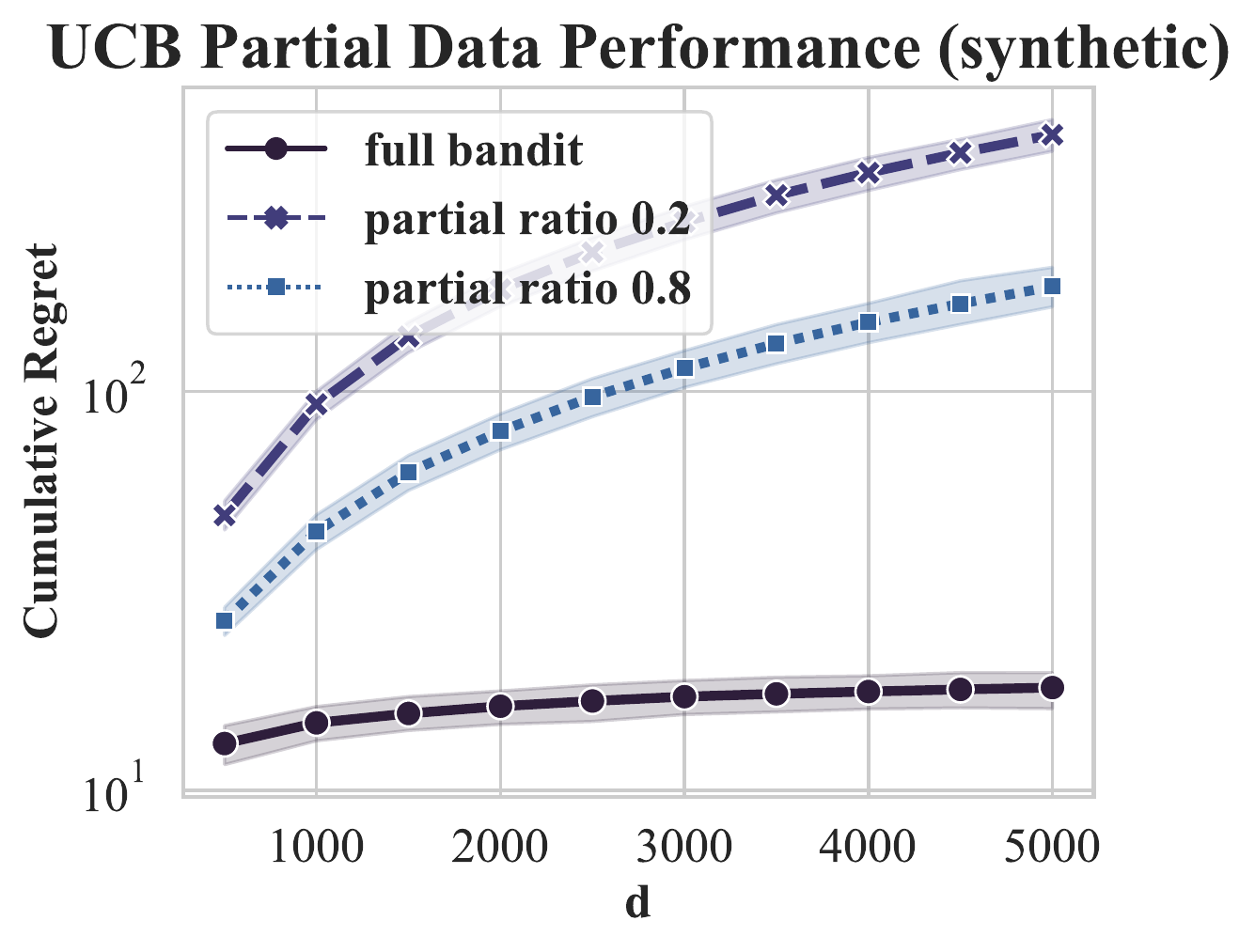}}
    \subfloat[\label{partial-data-comparison-synthetic-ts}]{\includegraphics[width=0.24\textwidth]{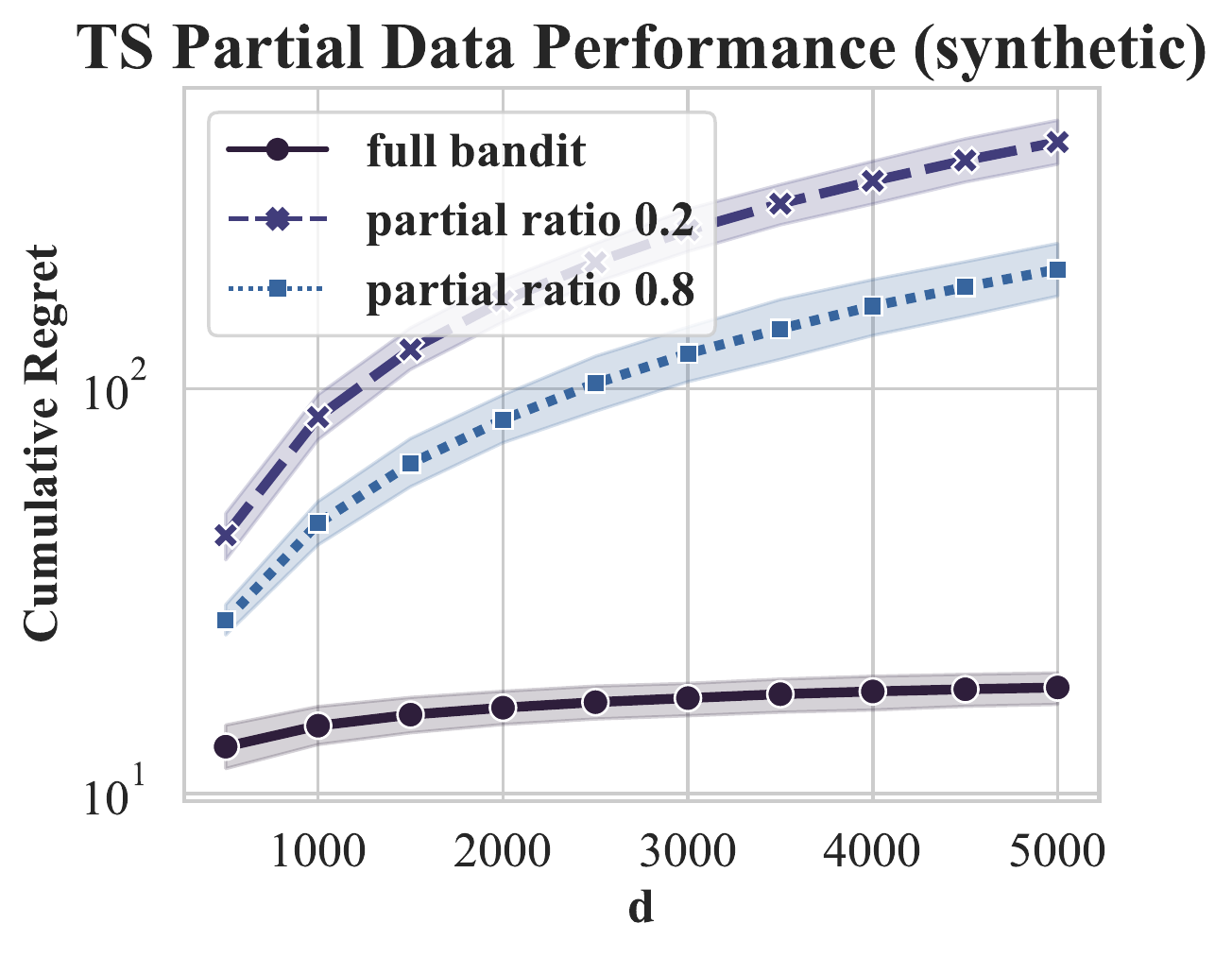}}
    \newline
    \subfloat[\label{partial-data-comparison-criteo-ucb}]{\includegraphics[width=0.45\textwidth]{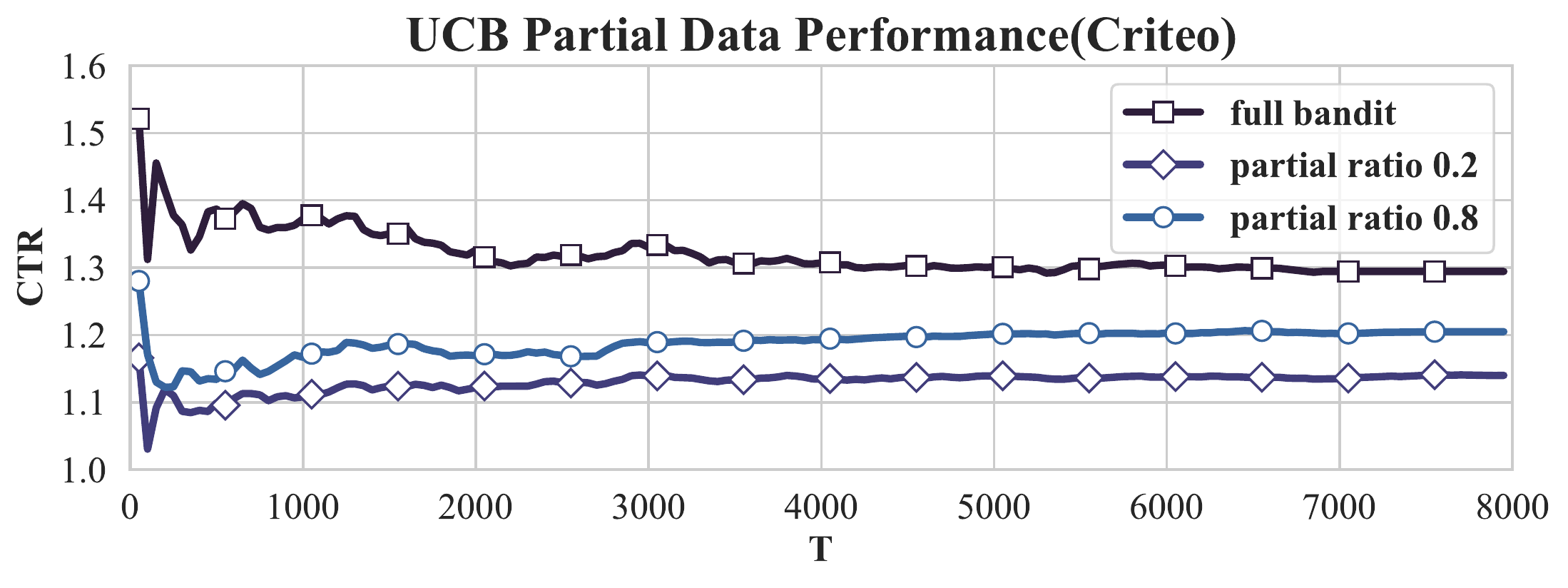}}
    \vspace{-2ex}
    \caption{Experimental results for (a) Regret performance with 20\% and 80\% partial data for LinUCB on synthetic data. (b) Regret performance with 20\% and 80\% partial data for LinTS on synthetic data. (c) Comparison between relative CTR for VFUCB and Partial LinUCB. For ease of comparison, the y scale for (a) and (b) is changed to log scale to differentiate between VFCB and partial bandits.}
    \label{fig:Partial-Comparison}
\end{figure}
\vspace{1ex}
\nosection{The effect of incorporating more data from other participants}
\label{nosec:answer-3}
We demonstrate that from both of our synthetic dataset in Figure~(\ref{partial-data-comparison-synthetic-ucb}) and Figure~(\ref{partial-data-comparison-synthetic-ts}), our contextual bandits' performance increase when we incorporate more data from other participants. The synthetic cumulative regret bound is 10x smaller when using the full data compared with using partial data. Additionally, compared with partial ratio\footnote{The partial ratio is $\cfrac{\#data_{used}}{\#data_{total}}$.}=0.2, contextual bandit at partial ratio=0.8 has smaller cumulative regret. 
To ensure the effect of incorporating all data is applicable on the real-world datasets, we perform experiment on real-world dataset shown in Figure~(\ref{partial-data-comparison-criteo-ucb}). There is an increase of 0.1 in relative CTR metric from partial ratio=0.2 to partial ratio=0.8, and another 0.1 increase to full LinUCB. This shows that the positive effect of incorporating additional data from different department is also true on real-world dataset.
Therefore, using more context data from other department helps improve the performance of contextual bandit algorithm, providing insight to improve the recommendation service quality in a federated manner. %

\nosection{Summary} In summary, our experiments show that our VFCB protocols is indeed lossless compared with the centralized variant. We then conduct synthetic analysis on the extra computational cost and communication cost introduced by O3M on VFCB, and show that the effect is still manageable. Finally, the experiment on both synthetic and real-world dataset suggest that our VFCB protocols can indeed benefit from additional data under the vertical federated settings. Therefore, these 3 answers combined to show the overall value of our VFCB protocols.

\section{Conclusion}
This paper studies an important problem of building online bandit algorithms in the vertical federated setting. This problem is critical for many real-world scenarios where data privacy and compliance are considered. We point out, for this specific problem, we can use a simple yet effective privacy-enhancing technique named O3M to avoid heavy cryptographic operations. Therefore the proposed protocol is very practical and promising for real business products. Since we are the first to formally study this problem to our best knowledge, there still remain many interesting directions to be explored. For example, can we adopt a similar strategy to the neural bandit in the vertical federated setting? We hope our research can provide insights into how to build an efficient bandit algorithm with privacy constraints and draw more attention from the community in this direction.

\bibliographystyle{ACM-Reference-Format}
\bibliography{reference}

\appendix
\section{VFTS}
\subsection{VFTS Algorithm}
\label{appendix:VFTS}
\begin{algorithm}[ht]
 \caption{VFTS}
 \label{alg:VFTS}
\begin{algorithmic}[1]
  \STATE \textbf{Input:} prior variance $v$.
  \STATE Initial $\tilde{\Lambda}_0 = \mathbf{I}_d$, $\tilde{u}_0 = \mathbf{0}$, and $\hat{\theta}_1 =  \mathbf{0}$
  \STATE Any PP or a secure third party randomly generates a orthogonal $Q$ and conduct a column partition $Q=[Q^1,Q^2, \cdots, Q^M]$ where $Q^i\in \R^{d\times d_i}$ and send the i-th submatrix to the client i.
  \FOR {$t=1$ {\bfseries to} $T$}
  
  \STATE An action set $X_t = \{x_{t,i}\}^{K}_{i=1}$ arrives to the system while the client $j$ can only observe $x^{j}_{t,i} \in R^{d_j \times 1}$ and $x_{t,i} = [x_{t,i}^{1}, x_{t,i}^{2},\cdots, x_{t,i}^{M}]$
  
  \STATE AP collects $\tilde{x}_{t,i} = \sum_{j=1}^M Q^j x_{t,i}^{j}$ from client $1$ to $M$ (includes itself)
  
  \STATE AP samples $\tilde{\mu}_t\sim \mathcal{N}(\tilde{\theta}_t, v^2\tilde{\Lambda}^{-1}_t)$ and recommend the item $a_t = \arg \max_{i\in [K]} \tilde{x}_{t,i}^{\top}\tilde{\mu}_t$ 
  
  \STATE AP receives $r_t$ from the user
  
  \STATE AP updates its local estimator by 
  \begin{align*}
     \tilde{\Lambda}_{t+1} = \tilde{\Lambda}_t + \tilde{x}_{t,a_t}\tilde{x}_{t,a_t}^{\top}, \quad  \tilde{u}_{t+1} =  \tilde{u}_t + r_t\tilde{x}_{t,a_t}
  \end{align*}
  and 
  \begin{align*}
     \tilde{\theta}_{t+1} = \tilde{\Lambda}^{-1}_{t+1} \tilde{u}_{t+1}
  \end{align*}
  \ENDFOR 
\end{algorithmic}
\end{algorithm}

\subsection{VFTS Loseless Proof}
Following the established framework, we prove the following lemma to show VFTS is loseless.

\begin{theorem}
Given the fixed sequence $\{x_{t,a}\}_{t\in[T], a\in A}$ and corresponding return sequence $\{r_{t,a}\}_{t\in[T], a\in A}$, for any time $t\in [T]$, we have 
\begin{itemize}
    \item The estimated reward in the VFTS follows the same distribution as the one in the LinTS, i.e., $\tilde{x}_{t,a}^{\top}\tilde{\mu}_t$ is the same distribution as $x_{t,a}^{\top}\hat{\mu}_t$.
\end{itemize}
\end{theorem}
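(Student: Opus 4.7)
The plan is to adapt the VFUCB induction of Section 4.3 to the stochastic setting of Thompson sampling. I would carry out an induction on $t$ that maintains three invariants: (i) $\tilde{\Lambda}_t = Q\Lambda_t Q^\top$, so that $\tilde{\Lambda}_t^{-1} = Q\Lambda_t^{-1}Q^\top$; (ii) $\tilde{\theta}_t = Q\hat{\theta}_t$; and (iii) a coupling under which VFTS and LinTS produce identical histories $\{(a_s, r_s)\}_{s\le t}$ almost surely. The base case is immediate from the shared initialization $\tilde{\Lambda}_0 = \Lambda_0 = I$, $\tilde{\theta}_1 = \hat{\theta}_1 = 0$, together with $QIQ^\top = I$.

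For the inductive step, substituting (i) and (ii) into the VFTS sampling rule gives $\tilde{\mu}_t \sim \mathcal{N}(Q\hat{\theta}_t,\, v^2 Q\Lambda_t^{-1}Q^\top)$, which is exactly the law of $Q\hat{\mu}_t$ when $\hat{\mu}_t\sim\mathcal{N}(\hat{\theta}_t,\, v^2\Lambda_t^{-1})$. Couple the two samplers by drawing a single $z\sim\mathcal{N}(0,I_d)$ and setting $\hat{\mu}_t = \hat{\theta}_t + v\,\Lambda_t^{-1/2} z$ together with $\tilde{\mu}_t = Q\hat{\mu}_t$; the marginal laws are then correct in both algorithms. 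Under this coupling, for every arm $a$,
\begin{equation*}
\tilde{x}_{t,a}^\top\tilde{\mu}_t \;=\; (Qx_{t,a})^\top(Q\hat{\mu}_t) \;=\; x_{t,a}^\top Q^\top Q\,\hat{\mu}_t \;=\; x_{t,a}^\top\hat{\mu}_t,
\end{equation*}
using $Q^\top Q = I_d$. Hence the argmax selects the same arm $a_t$, the observed reward $r_t$ is identical, and the update equations propagate (i) and (ii) to time $t+1$ via the same algebra used in the VFUCB proof: $\tilde{\Lambda}_{t+1} = Q\Lambda_{t+1}Q^\top$, $\tilde{u}_{t+1} = Qu_{t+1}$, and therefore $\tilde{\theta}_{t+1} = Q\hat{\theta}_{t+1}$. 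The per-arm identity above already gives the claimed equality in distribution between $\tilde{x}_{t,a}^\top\tilde{\mu}_t$ and $x_{t,a}^\top\hat{\mu}_t$.

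The main obstacle relative to the UCB case is that the action sequence is random, so one cannot naively claim pathwise equality of the two histories and let the induction trivially continue. The coupling above is what resolves this: by routing both samplers through a common Gaussian source, the stochastic action and reward streams coincide almost surely, and the deterministic algebra from the VFUCB proof can be reused verbatim. A small subtlety to handle carefully is that $\Lambda_t^{-1/2}$ refers to any fixed matrix square root (e.g., Cholesky), and then $Q\Lambda_t^{-1/2}$ serves as a valid square root of $\tilde{\Lambda}_t^{-1}$ because $(Q\Lambda_t^{-1/2})(Q\Lambda_t^{-1/2})^\top = Q\Lambda_t^{-1}Q^\top$; this makes the coupling well-defined and does not force VFTS to use any particular canonical square root internally.
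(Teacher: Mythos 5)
Your proof is correct, and it takes a genuinely more careful route than the paper's. The paper's argument is a one-line distributional computation: it asserts $\tilde{\mu}_t\sim\mathcal{N}(Q\hat{\theta}_t,\,v^2 Q\hat{\Lambda}_t^{-1}Q^{\top})$ (stated with a typo as a claim about $\tilde{\theta}_t$), then reads off the mean and variance of the Gaussian linear functional $\tilde{x}_{t,a}^{\top}\tilde{\mu}_t = x_{t,a}^{\top}Q^{\top}\tilde{\mu}_t$ and matches them to $\mathcal{N}(x_{t,a}^{\top}\hat{\theta}_t,\,v^2 x_{t,a}^{\top}\hat{\Lambda}_t^{-1}x_{t,a})$. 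Crucially, this computation silently assumes the invariants $\tilde{\theta}_t = Q\hat{\theta}_t$ and $\tilde{\Lambda}_t^{-1} = Q\Lambda_t^{-1}Q^{\top}$ at every $t$, which it borrows from the VFUCB induction without addressing that in Thompson sampling these quantities depend on a \emph{random} action history, so the deterministic induction does not transfer verbatim. Your coupling through a common Gaussian source $z$ is precisely the missing ingredient: it yields pathwise (almost-sure) equality $\tilde{x}_{t,a}^{\top}\tilde{\mu}_t = x_{t,a}^{\top}\hat{\mu}_t$ for every arm, hence identical action and reward streams, which legitimizes propagating the invariants and in fact proves a stronger conclusion (equality of the joint law of entire trajectories, not merely the per-step marginal law of the sampled scores). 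Your remark about $Q\Lambda_t^{-1/2}$ being a valid square root of $\tilde{\Lambda}_t^{-1}$ is also the right level of care, since the theorem should not depend on which factorization VFTS uses internally. In short: the paper buys brevity by computing marginals and leaning implicitly on the UCB induction; you buy rigor and a strictly stronger statement at the cost of setting up the coupling explicitly. The only cosmetic caveat is tie-breaking in the argmax, which under your coupling is resolved identically in both algorithms provided they use the same tie-breaking rule (and ties are measure-zero for generic contexts).
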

\begin{proof}
From the design of the Algorithm~\ref{alg:VFTS}, we have $$\tilde{\theta}_t\sim\mathcal{N}(Q\tilde{\mu}_t, v^2 Q\hat{\Lambda}_t^{-1}Q^{\top}).$$
Recall that in the LinTS, $x_{t,a}^{\top}\hat{\mu}_t\sim \mathcal{N}(x_{t,a}^{\top}\hat{\theta}_t, v^{2}x_{t,a}^{\top}\hat{\Lambda}_t^{-1}x_{t,a})$
Thus we know $\tilde{x}_{t,a}^{\top}\tilde{\mu}_t\sim \mathcal{N}(x_{t,a}^{\top}\hat{\theta}_t, v^2 x_{t,a}^{\top}\hat{\Lambda}_t^{-1}x_{t,a})$ follows the same distribution as $x_{t,a}^{\top}\hat{\mu}_t$ for all $a\in \mathcal{A}_t$.
\end{proof}

\section{Privacy Analysis}
\label{appendix:privacy}
Here we proof Theorem~\ref{theorem:masked-secure}.
\begin{proof}

Consider an arbitrary rotational matrix $R$ with its inverse $R^{-1}$.
Masked data $D$ is the result of multiplying the orthogonal mask $Q_1$ and the user context data $X_1$, i.e., $D = Q_1X_1$. 
Let $Q_2 = Q_1R$ and $X_2 = R^{-1}X_1$. Since an orthogonal matrix is still orthogonal after rotation, we have $D = Q_1X_1 = Q_2R^{-1}RX_2 = Q_2X_2$. 
As $R$ is arbitrary, we have infinite number of $Q_2$ and $X_2$.
\end{proof}

\section{Complexity Analysis}
\label{appendix:complexity}
\subsection{Computational Complexity}
We conduct computational analysis on both VFCB protocols. We assumed that for computational cost calculation, a single addition, multiplication, comparison or random generation operation for one element of the matrix has 1 operation cost. We then derive the standard addition, multiplication, dot for matrix operation base on this assumption to calculate the synthetic computational cost. These operation cost are unoptimized naive implementation cost to give an upper bound. For known Big-O cost such as n-dimensional matrix inverse ($O(n^3)$), we directly use them in our calculation.

As illustrated in Algorithm~\ref{alg:vfUCB} and Algorithm~\ref{alg:VFTS}, our VFCB protocols can be divided into the following stages:
\begin{enumerate}
    \item Mask initialization
    \item Selection process ($\forall t \in T$)
    \item Update process ($\forall t \in T$)
\end{enumerate}
Hence, we perform complexity calculation for each stages of our VFCB protocols and combine them to provide an overall computational complexity. Note that the Mask initialization process is shared between both VFUCB and VFTS.

\subsubsection{VFUCB}
\subsubsection*{Stage (1)}
In stage (1), our approach to generate orthogonal matrix $Q$ is to generate a random matrix $M \in R^{d \times d}$, which cost $o(d^2)$, and conduct Gram-Schmitz orthogonalization ($O(d^3)$) on it to ensure that the resulted mask $Q \in R^{d \times d}$ matrix is orthogonal. The overall computation cost for this stage is $O(d^3)$.

\subsubsection*{Stage (2)}
In stage (2), $\forall t \in T$, we need to consider VFUCB selection process. 
The masking and adding process for $\tilde{x}_{t,i}$ for every item $K$ is $O(K \times d^2) + O(K \times M \times d)$.
The process of calculating $\tilde{r}_{t,i}$ for every item $K$ is $O(K \times d)$.
The process of calculating $\Lambda^{-1}_{t}$ is $O(d^3)$, which is shared by all item $K$.
The process of calculating $\tilde{B}_{t,i}$ for every item $K$ is $O(K \times d^2)$.
The process of calculating for every item $K$ $\arg \max a_t$ is $O(K)$.
The overall cost of this stage is $O(T\times(K\times M\times d + K\times d^2 + d^3))$.

\subsubsection*{Stage (3)}
In stage (3), $\forall t \in T$,  we need to consider VFUCB update process.
The process of calculating $\Lambda_{t+1}$ is $O(d^2)$.
The process of calculating $\bar{u}_{t+1}$ is $O(d)$.
The process of calculating $\bar{\theta_{t+1}}$ is $O(d^2)$.
The overall cost of this stage is $O(T \times d^2).$

\subsubsection*{Overall Cost}
By combining the previous three stages, the overall cost for VFUCB is $O(T\times(K\times M\times d + K\times d^2 + d^3))$. Noted that the cost is dynamically determined by the size of each variables.

\subsubsection{LinUCB}
For LinUCB, stage (1) does not exist. Additionally, in stage (2), the cost for obtaining $\tilde{x}_{t,i}$ from O3M is not necessary. Hence the overall cost is the direct result of removing these two cost, which is $O(T\times(K\times d^2 + d^3)$.
\subsubsection{VFTS}
\subsubsection*{Stage (1)}
In stage (1), our approach on O3M is identical with VFUCB. Hence the overall computation cost for this stage is $O(d^3)$.
\subsubsection*{Stage (2)}
In stage (2), $\forall t \in T$, we need to consider VFTS selection process. 
The masking and adding process for $\tilde{x}_{t,i}$  for every item $K$ is $O(K \times d^2) + O(K \times M \times d)$.
The process of calculating covariance matrix $v^2\Lambda^{-1}_{t}$ is $O(d^2)$. Note that the $\Lambda^{-1}_{t}$ is compute in stage (3) update process.
The process of sampling of $\tilde{\mu}_{t}$ from multivariate normal distribution is $O(d^3)$ as Chelosky decomposition is $O(d^3)$.
The process of calculating $\tilde{x}_{t,i}^{\top} \tilde{\mu}_t$ for every item $K$is $O(K \times d)$.
The process of calculating $\arg \max a_t$ for every item $K$ is $O(K)$.
The overall cost of this stage is $O(T \times(K \times M \times d + K \times d^2 + d^3))$. 
\subsubsection*{Stage (3)}
In stage (3), $\forall t \in T$,  we need to consider VFTS update process.
The process of calculating $\Lambda_{t+1}$ is $O(d)$.
The process of calculating $\Lambda_{t+1}^{-1}$ is $O(d^3)$.
The process of calculating $u_{t+1}$ is $O(d)$.
The process of calculating $\theta_{t+1}$ is $O(d^2)$.
The overall cost of this stage is $O(T \times d^3)$.
\subsubsection*{Overall Cost}
By combining the previous three stages, the overall cost for VFTS is $O(T\times(K\times M\times d + K\times d^2 + d^3))$. Noted that the cost is also dynamically determined by the size of each variables.
\subsubsection{LinTS}
For LinTS, similarly, stage(1) does not exist. Furthermore, the cost of O3M process does not exist as well. Hence the overall stage (2) cost is reduced to $O(T \times(K \times d + d^3))$. Note that unlike LinUCB which also have $K\times d^2$ term during UCB value calculation, LinTS only have $K \times d$ term for calculation after sampling, therefore its overall cost reduce to $O(T \times(K \times d + d^3))$.

\subsection{Communication Cost}
We assume that communication cost for each element of the array is 1. For convenient reason, we give an upper bound of this process, which means that we also include the masked communication cost for the acting mask host PP in stage (1) and AP in stage (2). As shown in Algorithm~\ref{alg:vfUCB} and Alogrithm~\ref{alg:VFTS}, there are two stages of communication happened with VFCB.

\begin{enumerate}
    \item Mask Delivery (during initialization)
    \item Masked context delivery ($\forall t \in T$)
\end{enumerate}

In stage (1), the mask delivery require the Mask Generator PMG sends mask $Q^i$ to client $i$.  As $Q = [Q^1, Q^2, \cdots Q^j]$, the total mask delivery cost for $Q \in {d \times d}$ is $d^2$.

In stage (2), the masked sums are $Q^j x_{t,i}^{(j)} \in R^{d \times 1}$ for $i \in [K], j \in [M]$ Hence the masked sum delivery cost at each t is $\sum_i \sum_j d = K \times M \times d $, and the total mask delivery sum is $T \times K \times M \times d$.

Therefore, the overall communication cost is $d^2 + T \times K \times M \times d$, which can be simplify to $O(T \times K \times M \times d)$, given that in practice, $T \times K \times M > d$.

\section{Experiments}
\subsection{Experiments Environment}
\label{subsec:experiment-env}
Experiments on synthetic datasets and trail runs for real datasets are conducted on a workstation with AMD Epyc 7H12, 256G memory. The complete experiments for real-world dataset and hyperparameters search are conducted on cluster servers each with 23 core of Intel 8252C, 110G RAM. All experiment environments are built from docker image.

\subsection{Synthetic Data Experiment Setting}
\label{subsec:synthetic-setting}
In order to simulate the vertical federated settings, $\forall a\in[K]$, $x_{t,a}$ is partitioned into $M$ parts which are held by the $M$ th participant respectively. The number of context for each participant is derived from a partition vector $[1,\cdots,M]$, where $M$ represents the number of columns partitioned for participant $M$. Hence the context partition is $x_{t,a} = [x_{t,a}^{1},\cdots, x_{t,a}^{M}]$.

The central bandits experiments are performed on $d=100,K=10,T=5000$. The VFCB experiments are performed with the same parameters and a partition of $[20,20,20,20,20]$, which represents equal dimension number in 5 participants. Each experiment is repeated 5 times to provide a confidence interval. $\beta_t = 0.5$ is used for LinUCB and $v = 0.01$ is used for LinTS.

\subsection{Criteo Dataset Experiment Setting}
\label{subsec:criteo-setting}

Specifically, we follow the assumption that the user features are all numerical values and the item features are all categorical in \cite{DBLP:conf/mlsys/MalekzadehAHL20}. Following their method of feature engineering, we first use feature hashing to construct 3 hash-encoded values from  26 category values. After that, a pairing function is used to pair the 3 hash-encoded values to a single item label. Then, we filter the 40 most common labels as item label in experiment data. However, instead of just using the item label, we use the 3 encoded values as item features for that item label. This helps to capture the user-item interaction in our single parameter bandit setting. Finally, we build each log entry by concating the respective user features(dimension $d_u$), item features(dimension $d_i$), item label and the respective respond value $r$. We further carry out feature engineering by min-max scaling the feature columns and multiplied an individual scaling factor for each user features. The context features $x_{t,a} \in R^{d_u \times d_i}$ is obtained via outer addition between user features and item features for each log entry. 

Following the approach of \cite{DBLP:conf/www/LiCLS10}, we use the unbiased offline evaluation policy for our bandits. Similarly, we report the relative CTR as the CTR metric. The $CTR_{randoom}$ is obtained by averaging the random policy among 5 random states. After a grid search of hyperparameter, $\beta_t = 0.6$ is used for LinUCB and all subsequent partial experiments for the dataset. We report each partial LinUCB experiment with 5 different random partitions.

\end{document}